\def\eqref#1{equation~\ref{#1}}
\def\1{\bm{1}}
\DeclareMathAlphabet{\mathsfit}{\encodingdefault}{\sfdefault}{m}{sl}
\SetMathAlphabet{\mathsfit}{bold}{\encodingdefault}{\sfdefault}{bx}{n}
\DeclareMathOperator*{\argmin}{arg\,min}
\theoremstyle{plain}
\newtheorem{theorem}{Theorem}[section]
\newtheorem{proposition}[theorem]{Proposition}
\newtheorem{lemma}[theorem]{Lemma}
\newtheorem{corollary}[theorem]{Corollary}
\title{Efficient Identification of Direct Causal Parents via Invariance and Minimum Error Testing}
\author{\name Minh Nguyen \email bn244@cornell.edu \\
      \addr School of Electrical and Computer Engineering, Cornell University and Cornell Tech
      \AND
      \name Mert R. Sabuncu \email msabuncu@cornell.edu \\
      \addr School of Electrical and Computer Engineering, Cornell University and Cornell Tech \\
      Department of Radiology, Weill Cornell Medicine}
\newcommand{\indep}{\mbox{${}\perp\mkern-11mu\perp{}$}}
\newcommand{\EE}{\mathbb{E}}
\newcommand{\VV}{\textrm{Var}}
\newcommand{\MSE}{\mathsf{MMSE}}
\newcommand{\PA}{\mathsf{PA}}
\newcommand{\AN}{\mathsf{AN}}
\newcommand{\CH}{\mathsf{CH}}
\newcommand{\DE}{\mathsf{DE}}
\newcommand{\ND}{\mathsf{ND}}
\newcommand{\IP}{\mathcal{I}}
\newcommand{\MIP}{\mathcal{MI}}
\newcommand{\SICP}{\ensuremath{\hat{\mathbf{S}}_{\text{ICP}}}}
\newcommand{\SIAS}{\ensuremath{\hat{\mathbf{S}}_{\text{IAS}}}}
\newcommand{\SStarAbrr}{\ensuremath{\mathbf{S}^*}}
\newcommand{\SStarFull}{\ensuremath{\mathbf{S}^*{=}\DE(E)\cap\PA(Y)}}
\newcommand{\mICP}{MMSE-ICP}
\newcommand{\MNAME}{fastICP}
\begin{document}

\maketitle

\begin{abstract}
Invariant causal prediction (ICP) is a popular technique for finding causal parents (direct causes) of a target via exploiting distribution shifts and invariance testing~\citep{peters2016causal}.
However, since ICP needs to run an exponential number of tests and fails to identify parents when distribution shifts only affect a few variables, applying ICP to practical large scale problems is challenging.
We propose \mICP~and \MNAME, two approaches which employ an error inequality to address the identifiability problem of ICP.
The inequality states that the minimum prediction error of the predictor using causal parents is the smallest among all predictors which do not use descendants.
\MNAME~is an efficient approximation tailored for large problems as it exploits the inequality and a heuristic to run fewer tests.
\mICP~and \MNAME~not only outperform competitive baselines in many simulations but also achieve state-of-the-art result on a large scale real data benchmark.
\end{abstract}

\section{Introduction}
Causal discovery (CD) can offer insights into systems' dynamics~\citep{pearl2018theoretical} which are helpful for influencing some outcomes (e.g.~having diseases or not) or creating robust ML models.
For example, a model based on a target's causal parents can robustly predict the target despite various distribution shifts.
CD can be global or local: global CD searches for the complete causal graph while local CD only searches for causal relations surrounding a specific target $Y$.
Thus, global CD is often intractable as the search domain grows exponentially with the number of variables.
Local CD is more tractable and is sufficient if the goal is to change $Y$ through interventions or to build domain-invariant ML models of $Y$.
Recently, many works have built on local CD to improve ML models' out-of-distribution generalization~\citep{rojas2018invariant,magliacane2018domain,arjovsky2019invariant,christiansen2021causal}.

Invariant causal prediction (ICP) is a local CD method that finds the causal parents of $Y$ using the invariance property: the distribution of $Y$ conditioned on all of its causal parents will be invariant under interventions on variables other than $Y$~\citep{peters2016causal}.
Specifically, ICP tests all subsets of variables for invariance and outputs the intersection (denoted as $\SICP$) of all invariant subsets.
As the number of subsets grows exponentially, applying ICP to large problems (> 20 variables) is difficult.
Besides, ICP is guaranteed to identify all causal parents only when every variable in the system except $Y$ is perturbed~\citep{peters2016causal}.
However, in many practical problems (e.g.~problems in cell biology with 20,000 variables), it is almost impossible to perturb all variables~\citep{uhler2024building}.
With limited perturbation/intervention, ICP may fail to identify causal parents~\citep{rosenfeld2021risks,mogensen2022invariant}, as disjoint subsets can be invariant, yielding an empty intersection (see Figure~\ref{fig:motivation}).

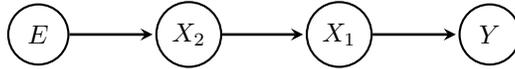
\begin{figure}[t]
    \centering
    \begin{tikzpicture}[> = stealth, shorten > = 1pt, auto, node distance = 2.0cm, thick]
        \tikzstyle{every state}=[draw=black, thick, fill=white, minimum size=0.8cm]
        \node[state] (E) {$E$};
        \node[state] (X2) [right of=E]{$X_2$};
        \node[state] (X1) [right of=X2]{$X_1$};
        \node[state] (Y) [right of=X1] {$Y$};
        \path[->, line width=1, color=black] (E) edge node {} (X2);
        \path[->, line width=1] (X2) edge node {} (X1);
        \path[->, line width=1] (X1) edge node {} (Y);
    \end{tikzpicture}
    \caption{Consider this directed acyclic graph (DAG) representing a noisy causal structural model as a motivating example, where no causal mechanism is noise-free and thus there are no duplicate variables. Let $E$ denote the variable capturing the environment (or context)~\citep{mooij2020joint}. Its children are directly affected by intervention. This reflect distribution shifts between contexts. $Y$ is the target variable. $\SICP=\emptyset$ because the invariant sets are $\{X_1\}$, $\{X_2\}$, and $\{X_1, X_2\}$.
    In contrast, $\SIAS=\{X_1, X_2\}$.
    Our methods output $\{X_1\}$ since the prediction error of $\hat{Y}_M(X_1)$ is less than $\hat{Y}_M(X_2)$'s.
    }\label{fig:motivation}
\end{figure}

Invariant ancestry search (IAS) addresses ICP's failure to identify causal parents by outputting the causal ancestors of $Y$ instead~\citep{mogensen2022invariant}.
The causal ancestors identified is the union (denoted as $\SIAS$) of all minimally invariant subsets.
A set is minimally invariant if none of its proper subsets is invariant~\citep{mogensen2022invariant}.
Like ICP, $\SIAS$ can also be used to construct a stable predictor of $Y$ across environments even though it is not the most compact stable predictor.
Besides, intervening on an ancestor may not be as effective in influencing the value of $Y$ as intervening on a parent.
IAS can be sped up by only testing subsets up to a certain size at the cost of potentially finding fewer causal ancestors, thus trading accuracy for speed.

We propose two approaches that employ an error inequality to address the identifiability and scalability problem of ICP.
The error inequality implies that the prediction error of the predictor using $Y$'s causal parents is the smallest among all predictors which do not use $Y$'s descendants.
Thus, the first approach, \mICP~(short for \emph{minimum mean squared error ICP}), finds invariant sets that do not contain $Y$'s descendants and outputs the invariant set with the smallest error for predicting $Y$.
The second approach is \MNAME, an efficient approximation that relies on a heuristic and constrained search scheme (see Section~\ref{ssec:ficp}).
\MNAME~also exploits the error inequality to reduce the number of tests for invariance to identify the causal parents.
Consequently, \MNAME~can handle large-scale problems with numerous variables.
Experiments on simulated and real data show \mICP~and \MNAME~outperforming competitive baselines.

\section{Background}
In different environments/settings, data are generated using different generative mechanisms.
An intervention refers to perturbation of generative mechanism of a specific variable~\citep{pearl2009causality}.
Observational setting indicates no perturbation.
Interventional setting indicates one or more variables are perturbed.
Experimental data belong to interventional setting in which the interventions are known (which variables are perturbed and how are they perturbed).
Most naturally occurring distribution-shift data also fall under interventional setting, however the interventions are unknown.

Many CD methods use solely observational data.
Constraint-based~\citep{spirtes2000causation} and score-based~\citep{chickering2002optimal} methods can identify up to the Markov equivalent class (MEC) of the true graph, leaving some edges with unresolved direction.
With additional assumptions (e.g.~non-Gaussianity~\citep{shimizu2006linear}, nonlinearity~\citep{hyvarinen1999nonlinear,hoyer2008nonlinear,zhang2009additive,peters2014causal,zhang2015estimation,rolland2022score}, or independent causal mechanisms~\citep{janzing2012information}), methods are able to resolve more edges~\citep{glymour2019review}.
Optimization-base methods~\citep{zheng2018dags,geffner2022deep} have become popular recently despite the lack of identifiability guarantees~\citep{mogensen2022invariant}.

Having data from multiple settings can improve identifiability.
When interventions are known, undirected edges in the MEC first estimated using observational data can be resolved using interventional data~\citep{he2008active}.
Other approaches jointly model interventional and observational data~\citep{hauser2012characterization,hauser2015jointly,wang2017permutation}.
There are also optimization-based methods that leverage both types of data~\citep{lorch2021dibs,lippe2022efficient}.
In contrast, UT-IGSP~\citep{squires2020permutation}, SDI~\citep{ke2023learning}, and DCDI~\citep{brouillard2020differentiable} are methods that can work with unknown interventions.
ICP~\citep{peters2016causal,pfister2019invariant,gamella2020active,martinet2022variance} and IAS~\citep{mogensen2022invariant} both assume unknown interventions and only depends on the invariance property to identify causal structures.
Our approaches also assume unknown interventions.

Prototypical global CD methods such as PC~\citep{colombo2014order,li2019constraint} and GES~\citep{chickering2002optimal} can be slow, so faster global CD ones~\citep{cheng2002learning,tsamardinos2006max,ramsey2017million} have been proposed, although they make strong assumptions that may be inappropriate in some problems.
In contrast, local CD methods only identify local causal structures surrounding a target variable.
For example, several algorithms only search for the Markov Blanket (MB) of the target (i.e.~its parents, children, and spouses)~\citep{koller1996toward,margaritis1999bayesian,tsamardinos2003time,gao2015local,yang2021towards}.
Consequently, local CD methods often have faster runtime or can afford to make milder assumptions leading to more accurate identification.
ICP and IAS only search for parents and ancestors of the target respectively, omitting all descendants of the target.
They both make very mild assumptions and offer guarantees in terms of false positive rates.
Another local CD approach is CORTH~\citep{soleymani2022causal} which is based on double ML~\citep{chernozhukov2018double}.
CORTH assumes that $Y$ is not a parent of any other variable (a strong assumption which can be unrealistic) and that $Y$ is a linear combination of other variables, and achieves linear runtime finding parents of $Y$.

\section{Method}\label{sec:method}
\subsection{Definitions and assumptions}\label{ssec:assumptions}
We represent causal relations between variables using a Directed Acyclic Graph (DAG), where each node is a variable and directed edges between nodes represent direct causal influence~\citep{pearl2009causality}.
We use the usual notations from graphical models~\citep{lauritzen1996graphical}.
Specifically, $\PA(Y)$, $\CH(Y)$, $\AN(Y)$, $\DE(Y)$, and $\ND(Y)$ denote the parents, children, ancestors, descendants, and non-descendants of $Y$ respectively.
Thus, $\PA(Y) \subset \AN(Y) \subset \ND(Y)$ and $\CH(Y) \subset \DE(Y)$.
There is an additional node $E$ in the graph to denote the different environments (or contexts)~\citep{mooij2020joint}.
A set of variables/predictors $\mathbf{S}$ is invariant if $Y\indep E|\mathbf{S}$.
As is common in the causal inference literature, we assume (1) no hidden confounder, (2) no intervention on $Y$, (3) no feedback between variables, (4) independence of mechanisms, and (5) independent additive noise structural causal models (SCMs), with no noise-free mechanisms and duplicate variables.
We further assume that (6) the additive noise variables in the SCM have unbounded support (e.g.~Gaussian noise).
Like prior work~\citep{mogensen2022invariant,mooij2020joint}, we assume that (7) $E$ is exogenous (i.e., $E$ has no parent).

\subsection{ICP and IAS}\label{ssec:ixx_intro}
ICP exploits distribution shifts between different environments $E$ to learn a subset of $\PA(Y)$~\citep{peters2016causal}.
Given a test for the hypothesis $H_{0,\mathbf{S}}$ that $\mathbf{S}$ is invariant, ICP outputs the intersection of all invariant subsets.
Specifically, $\SICP := \bigcap_{\mathbf{S}: H_{0,\mathbf{S}}\,\text{not rejected}} \mathbf{S}$.
In contrast, IAS tries to identify a subset of $\AN(Y)$~\citep{mogensen2022invariant}.
Let $ \widehat{\IP}$ be the set of all sets $\mathbf{S}$ for which $H_{0,\mathbf{S}}$ is not rejected.
Define $\widehat{\MIP} := \left\{ \mathbf{S} \in \widehat{\IP} \mid \forall \mathbf{S}' \subsetneq \mathbf{S}: \mathbf{S}' \not\in \widehat{\IP} \right\}$.
Thus, $\SIAS := \bigcup_{\mathbf{S} \in \widehat{\MIP}} \mathbf{S}$.
Although \citet{peters2016causal} proposed two different tests for invariance, $H_{0,\mathbf{S}}$, the approximate test based on residuals of a predictor (Method II) is usually used because of its speed.
Specifically, this statistical test checks whether the distribution (means and variances) of the predictor's errors across environments $E$ are the same.
The predictor is fitted using the data from all environments (see Appendix~\ref{app:invariance_test} for more details).

\subsection{Finding causal parents by minimizing error}\label{ssec:micp}
Instead of taking the intersection or union of invariant subsets, we rely on the concept of minimum mean squared error predictor to find the causal parents.
The mean squared error (MSE) of a predictior $\hat{Y}(\mathbf{X})$ with input variables $\mathbf{X}$ of target $Y$ is $\EE_{\mathbf{X}Y}(Y - \hat{Y}(\mathbf{X}))^2$, where $\EE$ denotes expectation.
Let the minimum MSE achieved by an optimal predictor $\hat{Y}_M(\mathbf{X})$ with input variables $\mathbf{X}$ be $\MSE(\mathbf{X})$.
Note that this is always with respect to predicting a target variable $Y$.
It is well-known that, if the minimization is unconstrained, 
the $\MSE(\mathbf{X})$ is the conditional expectation $\EE_{Y|\mathbf{X}}(Y|X)$, and the corresponding MMSE is equal to $\EE_{\mathbf{X}}\VV_{Y|\mathbf{X}}(Y|X)$~\citep{leon1994probability}, where $\VV$ denotes variance.
In practice, we will assume that a large-capacity model, trained on enough data with least square loss, can approximate this MMSE estimator.
The following provides the theoretical grounding for \mICP.

\subsubsection{Minimum mean squared error inequality}
\begin{lemma}[Error Inequality]\label{lem:inequality}
Let $\mathbf{X}_1$ and $\mathbf{X}_2$ denote two sets of variables, not necessarily mutually exclusive.
Then: $\MSE(\mathbf{X}_1 \cup \mathbf{X}_2) \leq \MSE(\mathbf{X}_1)$.
Equality holds if $Y\indep \mathbf{X}_2|\mathbf{X}_1$.
\end{lemma}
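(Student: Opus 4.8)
The plan is to reduce the statement to the law of total variance, leveraging the characterization of the minimum mean squared error already recorded just above the lemma: for an unconstrained (large-capacity) predictor one has $\MSE(\mathbf{X}) = \EE_{\mathbf{X}}\VV_{Y|\mathbf{X}}(Y|\mathbf{X})$, with the optimal predictor $\hat{Y}_M(\mathbf{X})$ being the conditional expectation $\EE(Y|\mathbf{X})$. The entire lemma is then a statement about how the expected conditional variance of $Y$ behaves as the conditioning set grows from $\mathbf{X}_1$ to $\mathbf{X}_1 \cup \mathbf{X}_2$.

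First I would dispatch the inequality by a one-line function-class argument: every predictor that is a measurable function of $\mathbf{X}_1$ is also a measurable function of $\mathbf{X}_1 \cup \mathbf{X}_2$ (it simply ignores $\mathbf{X}_2$), so minimizing the mean squared error over the larger class can only decrease the optimum, yielding $\MSE(\mathbf{X}_1 \cup \mathbf{X}_2) \leq \MSE(\mathbf{X}_1)$. For a quantitative handle that also delivers the equality case, I would instead apply the law of total variance to $\VV(Y|\mathbf{X}_1)$ after additionally conditioning on $\mathbf{X}_2$:
\begin{equation}
\VV(Y|\mathbf{X}_1) = \EE_{\mathbf{X}_2|\mathbf{X}_1}\big[\VV(Y|\mathbf{X}_1,\mathbf{X}_2)\big] + \VV_{\mathbf{X}_2|\mathbf{X}_1}\big[\EE(Y|\mathbf{X}_1,\mathbf{X}_2)\big].
\end{equation}
Taking $\EE_{\mathbf{X}_1}$ of both sides and invoking the MMSE characterization on each conditioning set gives the exact identity
\begin{equation}
\MSE(\mathbf{X}_1) = \MSE(\mathbf{X}_1 \cup \mathbf{X}_2) + \EE_{\mathbf{X}_1}\VV_{\mathbf{X}_2|\mathbf{X}_1}\big[\EE(Y|\mathbf{X}_1,\mathbf{X}_2)\big].
\end{equation}
Since the trailing term is an expectation of a (conditional) variance, it is nonnegative, which re-proves the inequality while making the exact gap explicit.

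The equality claim then follows by showing that the stated conditional independence annihilates this trailing term. If $Y \indep \mathbf{X}_2 | \mathbf{X}_1$, the conditional law of $Y$ given $(\mathbf{X}_1,\mathbf{X}_2)$ coincides with its law given $\mathbf{X}_1$, so $\EE(Y|\mathbf{X}_1,\mathbf{X}_2) = \EE(Y|\mathbf{X}_1)$ almost surely. This optimal predictor is therefore constant in $\mathbf{X}_2$ once $\mathbf{X}_1$ is fixed, hence $\VV_{\mathbf{X}_2|\mathbf{X}_1}[\EE(Y|\mathbf{X}_1,\mathbf{X}_2)] = 0$ and the gap in the identity vanishes, giving $\MSE(\mathbf{X}_1 \cup \mathbf{X}_2) = \MSE(\mathbf{X}_1)$.

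I expect the only delicate points to be bookkeeping rather than mathematics: being explicit that the MMSE characterization requires minimization over all measurable functions (the stated large-capacity assumption) and that the conditional-expectation identities hold almost surely. The substantive content is carried entirely by the law of total variance; the causal hypothesis enters only through the single implication that $Y \indep \mathbf{X}_2 | \mathbf{X}_1$ forces the optimal predictor to discard $\mathbf{X}_2$. Note also that the lemma asserts equality only as a sufficient consequence of the conditional independence, so I need just this one direction and can sidestep any discussion of when the converse holds.
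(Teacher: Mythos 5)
Your proof is correct, and its substantive mechanism is genuinely different from the paper's. The paper works directly from the definition of $\MSE$ as a minimization: the pointwise optimal predictor $\hat{Y}_M(\mathbf{X}_1,\mathbf{X}_2)$ beats every competitor under $\EE_{Y|\mathbf{X}_1,\mathbf{X}_2}$, and integrating that bound with the competitor $\hat{Y}_M(\mathbf{X}_1)$ gives the inequality; equality is then asserted to hold when the two argmins coincide, with the conditional-independence case dispatched as ``one can easily show.'' Your opening function-class remark (functions of $\mathbf{X}_1$ are a subclass of functions of $\mathbf{X}_1\cup\mathbf{X}_2$) is that same idea in compressed form, but the route you actually develop---the conditional law of total variance---is different machinery. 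It yields the exact identity
\begin{equation}
\MSE(\mathbf{X}_1) \;=\; \MSE(\mathbf{X}_1 \cup \mathbf{X}_2) \;+\; \EE_{\mathbf{X}_1}\VV_{\mathbf{X}_2|\mathbf{X}_1}\bigl[\EE(Y|\mathbf{X}_1,\mathbf{X}_2)\bigr],
\end{equation}
which subsumes both claims at once: the gap is an expected variance, hence nonnegative, and under $Y \indep \mathbf{X}_2 | \mathbf{X}_1$ the inner conditional expectation collapses to $\EE(Y|\mathbf{X}_1)$, a function of $\mathbf{X}_1$ alone, so the gap vanishes. What the paper's argument buys is self-containedness: it needs only optimality of the minimizer and monotonicity of expectation, never the characterization $\MSE(\mathbf{X}) = \EE_{\mathbf{X}}\VV_{Y|\mathbf{X}}(Y|\mathbf{X})$, whereas your identity leans on that characterization (legitimately---the paper states it immediately before the lemma under the unconstrained, large-capacity assumption you correctly flag as the delicate hypothesis). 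What your argument buys is more: the equality case becomes a one-line rigorous deduction rather than an assertion; the identity shows equality in fact holds under the weaker hypothesis of mean-independence, $\EE(Y|\mathbf{X}_1,\mathbf{X}_2) = \EE(Y|\mathbf{X}_1)$ a.s., not just full conditional independence; and the explicit gap term is exactly the kind of variance decomposition the paper itself invokes later in the proof of Corollary~\ref{cor:pa_set}, so your identity would integrate cleanly with the rest of the paper's development.
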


\begin{proof}
$\MSE(\mathbf{X}_1\cup\mathbf{X}_2) 
    = \EE_{\mathbf{X}_1, \mathbf{X}_2} \left[ \min_{\hat{Y}} \EE_{Y|\mathbf{X}_1, \mathbf{X}_2} (Y - \hat{Y})^2 \right]
    = \EE_{\mathbf{X}_1} \EE_{\mathbf{X}_2| \mathbf{X}_1} \left[ \min_{\hat{Y}} \EE_{Y|\mathbf{X}_1, \mathbf{X}_2} (Y - \hat{Y})^2 \right]$.

Let $\hat{Y}_M(\mathbf{X}_1, \mathbf{X}_2) := \argmin_{\hat{Y}} \EE_{Y|\mathbf{X}_1, \mathbf{X}_2} (Y - \hat{Y})^2$, then:
\begin{align}
    \EE_{Y|\mathbf{X}_1, \mathbf{X}_2} (Y - \hat{Y}_M(\mathbf{X}_1, \mathbf{X}_2))^2 &\leq \EE_{Y|\mathbf{X}_1, \mathbf{X}_2} (Y - \hat{Y})^2, \;\;\forall \hat{Y}\\
\intertext{Since expectation is monotonic,}
    \Rightarrow \EE_{\mathbf{X}_2| \mathbf{X}_1} \left[ \EE_{Y|\mathbf{X}_1, \mathbf{X}_2} (Y - \hat{Y}_M(\mathbf{X}_1, \mathbf{X}_2))^2 \right]
    &\leq \EE_{\mathbf{X}_2| \mathbf{X}_1} \left[ \EE_{Y|\mathbf{X}_1, \mathbf{X}_2} (Y - \hat{Y})^2 \right] = \EE_{Y|\mathbf{X}_1} (Y - \hat{Y})^2 , \;\;\forall \hat{Y} \label{eq:one}\\
\intertext{Since Equation~\ref{eq:one} holds for every $\hat{Y}$, it is also true for $\hat{Y}_M(\mathbf{X}_1) =\argmin_{\hat{Y}} \EE_{Y|\mathbf{X}_1} (Y - \hat{Y})^2$,}
    \Rightarrow \EE_{\mathbf{X}_1} \EE_{\mathbf{X}_2| \mathbf{X}_1} \left[ \EE_{Y|\mathbf{X}_1, \mathbf{X}_2} (Y - \hat{Y}_M(\mathbf{X}_1, \mathbf{X}_2))^2 \right]
    &\leq \EE_{\mathbf{X}_1} \left[ \EE_{Y|\mathbf{X}_1} (Y - \hat{Y}_M(\mathbf{X}_1))^2 \right] = \MSE(\mathbf{X}_1) \\
    \Rightarrow \MSE(\mathbf{X}_1\cup\mathbf{X}_2) &\leq \MSE(\mathbf{X}_1)
\end{align}

Equality occurs when $\hat{Y}_M(\mathbf{X}_1, \mathbf{X}_2) = \hat{Y}_M(\mathbf{X}_1)$, i.e., \begin{align}
    \argmin_{\hat{Y}} \EE_{Y|\mathbf{X}_1, \mathbf{X}_2} (Y - \hat{Y})^2 = \argmin_{\hat{Y}} \EE_{Y|\mathbf{X}_1} (Y - \hat{Y})^2, \;\;\forall \mathbf{X}_1, \mathbf{X}_2 \label{eq:equal}
\end{align}
If $Y\indep \mathbf{X}_2|\mathbf{X}_1$ then one can easily show that Equation~\ref{eq:equal} holds.
\end{proof}

\begin{corollary}\label{cor:pa_set}
In the causal DAG that satisfies the assumptions stated in Section~\ref{ssec:assumptions}, for all subset $\mathbf{S}$ of $\ND(Y)$:
\begin{itemize}
    \item $\MSE(\mathbf{S})\geq \MSE(\PA(Y))$,
    \item $\MSE(\mathbf{S}) = \MSE(\PA(Y))$, if and only if $\PA(Y) \subset \mathbf{S}$, and
    \item $\MSE(\mathbf{S}) > \MSE(\PA(Y))$, if and only if $\PA(Y) \not\subset \mathbf{S}$.
\end{itemize}
\end{corollary}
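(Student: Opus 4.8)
The plan is to reduce everything to Lemma~\ref{lem:inequality} together with the local Markov property of the DAG, and then to isolate the one genuinely analytic point—strictness—where the structural assumptions of Section~\ref{ssec:assumptions} must be used.

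First I would record the graphical fact that drives the whole argument: by the local Markov property, $Y$ is conditionally independent of its non-descendants given its parents, so for every $\mathbf{S}\subseteq\ND(Y)$ we have $Y\indep\mathbf{S}\mid\PA(Y)$. Applying the equality clause of Lemma~\ref{lem:inequality} with $\mathbf{X}_1=\PA(Y)$ and $\mathbf{X}_2=\mathbf{S}$ then gives $\MSE(\PA(Y)\cup\mathbf{S})=\MSE(\PA(Y))$. Applying the inequality clause of the same lemma with the roles swapped, $\mathbf{X}_1=\mathbf{S}$ and $\mathbf{X}_2=\PA(Y)$, gives $\MSE(\mathbf{S}\cup\PA(Y))\le\MSE(\mathbf{S})$. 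Chaining the two yields $\MSE(\PA(Y))=\MSE(\PA(Y)\cup\mathbf{S})\le\MSE(\mathbf{S})$, which is the first bullet.

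Next I would dispatch the easy direction of the equivalence. If $\PA(Y)\subseteq\mathbf{S}$ then $\PA(Y)\cup\mathbf{S}=\mathbf{S}$, so the equality $\MSE(\PA(Y)\cup\mathbf{S})=\MSE(\PA(Y))$ established above reads $\MSE(\mathbf{S})=\MSE(\PA(Y))$. Since the first bullet already gives $\MSE(\mathbf{S})\ge\MSE(\PA(Y))$ in general, the second and third bullets are contrapositives of one another, so it suffices to prove the strict inequality $\MSE(\mathbf{S})>\MSE(\PA(Y))$ whenever $\PA(Y)\not\subseteq\mathbf{S}$.

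For the strict part I would first rewrite the gap in a form that exposes what must be shown. Using the additive-noise form $Y=f(\PA(Y))+N_Y$ with $\EE[N_Y]=0$, the optimal predictor from the parents is $f(\PA(Y))$ and $\MSE(\PA(Y))=\VV(N_Y)$; conditioning the law-of-total-variance decomposition of $\VV(Y\mid\mathbf{S})$ on $\PA(Y)$ and using $Y\indep\mathbf{S}\mid\PA(Y)$ collapses the within-group variance to $\VV(N_Y)$ and leaves
\begin{equation*}
\MSE(\mathbf{S})-\MSE(\PA(Y))=\EE\!\left[\VV\!\left(f(\PA(Y))\mid\mathbf{S}\right)\right].
\end{equation*}
Thus equality holds exactly when $f(\PA(Y))$ is almost surely a deterministic function of $\mathbf{S}$, and the whole problem reduces to showing that this cannot happen when some parent $P\in\PA(Y)\setminus\mathbf{S}$ is missing from $\mathbf{S}$. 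Here is where I expect the real work to be, and where assumptions (4)--(6) enter: because the mechanism for $Y$ genuinely depends on $P$ (no noise-free or duplicated mechanisms), and because $P$ inherits, through its own noise $N_P$ and those of its ancestors, residual randomness of unbounded support that is independent of everything not downstream of it, no element of $\mathbf{S}\subseteq\ND(Y)$ can pin $f(\PA(Y))$ down to a single value. I would make this precise by exhibiting, for a generic fixed value of $\mathbf{S}$, two configurations of the exogenous noises that agree on all coordinates feeding $\mathbf{S}$ yet assign different values to $P$ and hence to $f(\PA(Y))$; the unbounded support of the noises guarantees such configurations occur on a set of positive probability, so the conditional variance above is strictly positive and $\MSE(\mathbf{S})>\MSE(\PA(Y))$. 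The delicate point to handle carefully is the case where $\mathbf{S}$ contains descendants of $P$ that are not descendants of $Y$: I must verify that observing such variables still leaves $N_P$, and therefore $P$, non-degenerate given $\mathbf{S}$, which again follows from the additive structure together with the unbounded, mutually independent noises.
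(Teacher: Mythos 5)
Your proposal follows essentially the same route as the paper's proof: the first bullet via two applications of Lemma~\ref{lem:inequality} (the equality clause under $Y\indep\mathbf{S}\mid\PA(Y)$ from the Markov condition, chained with the inequality clause), and the equality/strictness dichotomy via the decomposition $\MSE(\mathbf{S}) = \EE_{\mathbf{S}}\left[\VV\left(f_Y(\PA(Y))\mid\mathbf{S}\right)\right] + \sigma_Y^2$, with strict positivity of the conditional variance when a parent is missing resting on the no-noise-free-mechanism, no-duplicate, and unbounded-support assumptions. If anything, you supply more detail than the paper at the one genuinely delicate step---the two-noise-configuration argument and the case where $\mathbf{S}$ contains descendants of the missing parent---which the paper dispatches with ``it can be shown.''
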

\begin{proof}

The first inequality is an immediate corollary of Lemma~\ref{lem:inequality}.
Since $\mathbf{S}\subset\ND(Y)$, it follows from the Causal Markov condition that $Y\indep \mathbf{S}|\PA(Y)$.
By Lemma~\ref{lem:inequality}, $\MSE(\mathbf{S}\cup\PA(Y)){\leq}\MSE(\mathbf{S})$.
Furthermore, also by Lemma~\ref{lem:inequality}, $\MSE(\PA(Y)\cup\mathbf{S}){=}\MSE(\PA(Y))$.
Hence, $\MSE(\mathbf{S}){\geq}\MSE(\PA(Y))$.

It is easy to show that, in the assumed DAG, $\MSE(\PA(Y)) = \sigma_Y^2$, where $\sigma_Y^2$ is the variance of the independent additive noise for $Y$.
Furthermore, for any $\mathbf{S}$ which is a subset of $\ND(Y)$, $\MSE(\mathbf{S}) = \EE_{\mathbf{S}} \VV_{Y|\mathbf{S}} (f_Y(\PA(Y))|\mathbf{S}) + \sigma_Y^2$, where $f_Y(\PA(Y))$ is the functional mapping from $Y$'s parents to $Y$ in the SCM.
It can be shown that in the assumed noisy SCM where the noise distribution has unbounded support (e.g.~Gaussian noise), $\VV_{Y|\mathbf{S}} (f_Y(\PA(Y))|\mathbf{S}) > 0$ if and only if $\PA(Y) \not\subset \mathbf{S}$ and $\VV_{Y|\mathbf{S}} (f_Y(\PA(Y))|\mathbf{S}) = 0$ if and only if $\PA(Y) \subset \mathbf{S}$.
\end{proof}

For example, in Figure~\ref{fig:motivation}, $\MSE(\{X_2\}) > \MSE(\{X_1, X_2\}) = \MSE(\{X_1\})$.
Furthermore, since $\{X_1, X_2\}$ has 2 elements while $\{X_1\}$ has only 1 element, we can deduce that $\{X_1\}$ is the causal parent and not $\{X_2\}$.

\begin{algorithm}[tb]
\DontPrintSemicolon
    \KwIn{$E,X_1,X_2,\dots,X_d,Y$}
    \KwOut{Potential parents of $Y$}

    \lIf{$is\_invariant(E, \mathbf{X}, Y, \emptyset)$} {
        \Return~$\emptyset$
    }

    $\mathsf{candidates} = []$ \\
    \For{$\mathbf{S} \subset \{X_1, X_2,\dots,X_d\}$ in increasing cardinality order} {
        \lIf{$\exists \mathbf{S'}\in \mathsf{candidates}\;\text{such that}\;\mathbf{S'} \subset \mathbf{S}$} {
            \Continue
        }
        \lIf{$is\_invariant(E, \mathbf{X}, Y, \mathbf{S})$} {
            $\mathsf{candidates} = \mathsf{candidates} + [\mathbf{S}]$
        }
    }
    \Return~$\argmin_{\mathbf{S}\in \mathsf{candidates}} \MSE(\mathbf{S})$
\caption{\mICP}\label{alg:mmse}
\end{algorithm}

\subsubsection{Combining MMSE and invariance test}
Algorithm~\ref{alg:mmse} combines the error inequality with the invariance test to find causal parents.
Theorem~\ref{prop:identifiability} shows that Algorithm~\ref{alg:mmse} can identify all causal parents of $Y$ as long as $\PA(Y)\subset \DE(E)$.
This implies that Algorithm~\ref{alg:mmse} usually requires fewer interventions than the number of variables for complete identification.
For example, a single intervention at an upstream variable that affects all causal parents of $Y$ is sufficient.
This is much more favorable than ICP, which may need as many interventions at as the number of variables~\citep{peters2016causal}.

\begin{lemma}\label{lem:restrict}
If $\mathbf{S}_1$ is invariant, then $\mathbf{S}_2 = \mathbf{S}_1 \cap (\DE(E)\cap\ND(Y))$ is also invariant.
In other words, removing all nodes in $\mathbf{S}_1$ that are not in $\DE(E)\cap\ND(Y)$ from $\mathbf{S}_1$ results in another invariant set.
\end{lemma}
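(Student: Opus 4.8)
The plan is to recast the statement as a purely graphical one and argue by d-separation. Under the structural assumptions of Section~\ref{ssec:assumptions}, invariance $Y\indep E\mid\mathbf{S}$ is equivalent to $E$ and $Y$ being d-separated by $\mathbf{S}$ in the augmented DAG, so it suffices to show that $\mathbf{S}_2=\mathbf{S}_1\cap\DE(E)\cap\ND(Y)$ d-separates $E$ from $Y$ given that $\mathbf{S}_1$ does. Since $\mathbf{S}_2\subseteq\mathbf{S}_1$, I would argue by contraposition: assume some path $\pi$ between $E$ and $Y$ is active given $\mathbf{S}_2$, and show that $\pi$ (or a rerouting of it) stays active once we add back the removed nodes $\mathbf{R}=\mathbf{S}_1\setminus\mathbf{S}_2$, which all lie in $\ND(E)\cup\DE(Y)$; this contradicts invariance of $\mathbf{S}_1$. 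The useful monotonicity observation is that enlarging the conditioning set from $\mathbf{S}_2$ to $\mathbf{S}_1$ can only \emph{block} $\pi$ at a node of $\mathbf{R}$ that occurs on $\pi$ as a non-collider, since a conditioned collider only stays active. The whole argument therefore reduces to ruling out that a node of $\mathbf{R}$ appears as a non-collider on a suitably chosen (e.g.\ shortest) active path.

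I would then handle the two kinds of removed nodes separately, exploiting that $E$ is a source. For a non-descendant $w\in\ND(E)$, every walk leaving $E$ begins with an outgoing edge, so a standard induction shows that the sub-path of $\pi$ from $E$ to $w$ must contain a collider; I would use this, together with acyclicity, to constrain the orientations at $w$ and to short-circuit the remaining configurations via the directed path $E\to\cdots\to t$ witnessing that a neighbour $t\in\DE(E)$ is indeed a descendant of $E$. For a descendant $z\in\DE(Y)$, the symmetric argument uses the directed path $Y\to\cdots\to z$: it cannot be oriented consistently with the portion of $\pi$ running from $z$ back to $Y$, which forces any active appearance of $z$ to be accompanied by a collider that is itself a descendant of $z$, hence of $Y$, and again lets the path be shortened. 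In both cases the target is the same: every non-collider blocker on the shortest active path must already lie in $\DE(E)\cap\ND(Y)=\mathbf{S}_2$ and hence be retained.

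The step I expect to be the main obstacle is the \textbf{collider bookkeeping}, specifically the M-bias configuration in which a node retained in $\mathbf{S}_2$ sits on $\pi$ as a collider while one of its parents lies in $\mathbf{R}$. Un-conditioning that parent does not deactivate the collider (we are still conditioning on it), so the naive monotonicity argument breaks and the path can genuinely open up; this is where the proof must work hardest. The delicate case is a parent $w\in\ND(E)$ that is a common cause of a conditioned node in $\DE(E)$ and of an ancestor of $Y$ — note that appealing only to the identifiability regime $\PA(Y)\subseteq\DE(E)$ of Theorem~\ref{prop:identifiability} does \emph{not} by itself exclude it, since $w$ need not be a parent of $Y$. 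Pinning down exactly which conditioned colliders can have a removed parent, and arguing that the structural assumptions (causal sufficiency together with the environment's reach into the relevant part of the graph) forbid such M-structures, is the crux on which the lemma turns; I would attack it by a case analysis on the first such collider met when traversing $\pi$ outward from $E$.
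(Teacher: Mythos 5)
Your framework is sound (invariance as d-separation, contraposition, and the observation that shrinking the conditioning set can only unblock a path at a removed non-collider, while conditioned colliders retained in $\mathbf{S}_2$ are where the danger lies), and your diagnosis of the crux is exactly right. But the proposal never closes that case: it ends by hoping that the structural assumptions of Section~\ref{ssec:assumptions} forbid the M-structure in which a collider kept in $\mathbf{S}_2$ has a removed parent that still reaches $Y$. They do not, and no case analysis can close the gap, because that configuration is a genuine counterexample to the lemma. Take the DAG with edges $E \to A$, $B \to A$, $B \to Y$; every assumption of Section~\ref{ssec:assumptions} holds (causal sufficiency does not help you here: $B$ is observed, it is merely dropped from the conditioning set). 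Then $\mathbf{S}_1=\{A,B\}$ is invariant, since $\{B\}=\PA(Y)$ and the Markov property gives $Y \indep \{E,A\} \mid B$, hence $Y \indep E \mid \{A,B\}$. But $B \notin \DE(E)$, so the lemma's set is $\mathbf{S}_2=\{A\}$, and conditioning on the collider $A$ alone opens the path $E \to A \leftarrow B \to Y$ ($B$ is now an unconditioned non-collider): generically $Y \dep E \mid A$ (with unit-variance linear Gaussian mechanisms, $\Cov(E,Y \mid A)=-1/3 \neq 0$; the same failure occurs under the paper's perfect interventions, where $A:=1$ in the interventional environment). Moreover, as you yourself suspected, assuming $\PA(Y)\subset\DE(E)$ does not rescue the statement: with edges $E\to A$, $B\to A$, $E\to V$, $V\to W$, $B\to W$, $W\to Y$, one has $\PA(Y)=\{W\}\subset\DE(E)$ and $\mathbf{S}_1=\{A,B,V\}$ invariant, yet $\mathbf{S}_2=\{A,V\}$ leaves $E\to A\leftarrow B\to W\to Y$ open.

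For comparison, the paper's own proof founders on the same configuration, just silently: its case 2 asserts that any node of $\mathbf{S}_1$ lying on a blocking path must be in $\DE(E)\cap\DE(Y)$ ``since $E$ has no parent.'' That is false for the node $B$ above, which is the essential blocker of $E \to A \leftarrow B \to Y$ yet is neither a descendant of $E$ nor of $Y$: a path leaving $E$ stops consisting of descendants of $E$ as soon as it crosses a collider. So do not read your impasse as a lack of ingenuity --- you located the exact hole, and the honest conclusion is that Lemma~\ref{lem:restrict} as stated (and the paper's proof of it) is incorrect, not that a finer traversal argument is missing. Any repair must change the statement so that the retained set keeps the relevant ancestors of $Y$ (in the spirit of the ``minimally invariant sets are ancestral'' result of \citet{mogensen2022invariant}) rather than intersecting with $\DE(E)\cap\ND(Y)$, and the downstream use of the lemma in Theorem~\ref{prop:identifiability} would then need to be re-examined accordingly.
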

\begin{proof}
Since $\mathbf{S}_1$ is invariant, $Y\indep E|\mathbf{S}_1$ so $\mathbf{S}_1$ blocks all paths between $E$ and $Y$ by d-separation~\citep{pearl2009causality}.
If a node $X$ is in $\mathbf{S}_1$ but not in $\DE(E)\cap\ND(Y)$, then one of these two cases must hold.
\begin{enumerate}
    \item $X$ is not on any blocking path so $\mathbf{S}_1\setminus\{X\}$ still blocks all paths between $E$ and $Y$.
    \item $X$ is on a blocking path between $E$ and $Y$.
        Since $E$ has no parent (see Section~\ref{ssec:assumptions}), $X\in\DE(E)\cap\DE(Y)$.
        Thus, this blocking path is out-going from $Y$.
        Removing all nodes on this path will keep the path blocked.
        The removed nodes are descendants of $Y$ so they are not in $\DE(E)\cap\ND(Y)$.
\end{enumerate}
In both cases, invariance is not affected by excluding $X$.
Thus, excluding nodes in $\mathbf{S}_1$ that are not in $\DE(E)\cap\ND(Y)$ results in $\mathbf{S}_2$ that is also invariant.
\end{proof}

\begin{theorem}[Identifiability]\label{prop:identifiability}
Given that all tests of invariance return the correct results and $\PA(Y)\subset \DE(E)$, Algorithm~\ref{alg:mmse} will always find the set of causal parents $\PA(Y)$.
\end{theorem}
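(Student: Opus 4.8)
The plan is to reduce the theorem to two facts: that the list $\mathsf{candidates}$ built by Algorithm~\ref{alg:mmse} is exactly the collection of minimally invariant subsets of $\{X_1,\dots,X_d\}$, and that $\PA(Y)$ is the unique minimizer of $\MSE$ over that collection. First I would verify the characterization of $\mathsf{candidates}$ by induction on cardinality: since the loop visits subsets in increasing cardinality and skips any $\mathbf{S}$ that already contains a stored candidate, a set is stored if and only if it is invariant and has no invariant proper subset, i.e.\ it is minimally invariant; in particular the stored sets are pairwise non-nested. Next I would apply Lemma~\ref{lem:restrict}: if $\mathbf{S}$ is minimally invariant then $\mathbf{S}\cap(\DE(E)\cap\ND(Y))$ is invariant and contained in $\mathbf{S}$, so minimality forces $\mathbf{S}\subseteq\DE(E)\cap\ND(Y)\subseteq\ND(Y)$. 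Hence every candidate is a subset of $\ND(Y)$, and Corollary~\ref{cor:pa_set} applies to each of them. The degenerate branch is then easy: if $\PA(Y)=\emptyset$ then $Y$ is a function of its own noise alone, so $Y\indep E$, the invariance test for $\emptyset$ passes, and the algorithm correctly returns $\emptyset$; when $\PA(Y)\neq\emptyset$ the hypothesis $\PA(Y)\subset\DE(E)$ supplies an active directed path $E\rightsquigarrow p\to Y$, so $Y\dep E$ and the loop is entered.

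The crux is to show that $\PA(Y)$ is itself a candidate, i.e.\ that it is minimally invariant. Invariance of $\PA(Y)$ is immediate: since $Y$ is not intervened on and $E$ is exogenous we have $E\in\ND(Y)$, and the local Markov property gives $Y\indep E\mid\PA(Y)$. The hard part will be \emph{minimality} -- showing that no proper subset $\mathbf{S}'\subsetneq\PA(Y)$ is invariant. This is exactly where $\PA(Y)\subset\DE(E)$ must do the work: for a parent $p\in\PA(Y)\setminus\mathbf{S}'$ there is a directed path $E\rightsquigarrow p$, which together with the edge $p\to Y$ produces an $E$-to-$Y$ path whose only possible non-collider blockers are the internal nodes of $E\rightsquigarrow p$; I would try to choose $p$ so that this path is not blocked by $\mathbf{S}'$, contradicting $Y\indep E\mid\mathbf{S}'$. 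I expect this to be the delicate step, because an internal node of $E\rightsquigarrow p$ could itself be another parent of $Y$ lying in $\mathbf{S}'$ and could block the path. Ruling this out seems to require selecting $p$ minimal in the ancestral order among $\PA(Y)\setminus\mathbf{S}'$ so that the segment $E\rightsquigarrow p$ can be routed to avoid $\PA(Y)$, and I would flag that without such structural care this step can genuinely fail, so it is the place where the argument needs the most attention.

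Granting that $\PA(Y)$ is a candidate, the conclusion follows quickly. By Corollary~\ref{cor:pa_set}, every candidate $\mathbf{S}$ satisfies $\MSE(\mathbf{S})\geq\MSE(\PA(Y))$, with equality if and only if $\PA(Y)\subseteq\mathbf{S}$. Because the candidates are pairwise non-nested and $\PA(Y)$ is one of them, no other candidate can contain $\PA(Y)$; hence every candidate $\mathbf{S}\neq\PA(Y)$ has $\PA(Y)\not\subseteq\mathbf{S}$ and therefore $\MSE(\mathbf{S})>\MSE(\PA(Y))$. Thus $\PA(Y)$ is the strict minimizer of $\MSE$ over $\mathsf{candidates}$, so $\argmin_{\mathbf{S}\in\mathsf{candidates}}\MSE(\mathbf{S})=\PA(Y)$, which is precisely the set returned by Algorithm~\ref{alg:mmse}.
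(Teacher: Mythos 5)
Your proposal retraces the paper's own argument almost step for step: the $\mathsf{candidates}$ list is exactly the family of minimally invariant sets, Lemma~\ref{lem:restrict} forces every candidate into $\DE(E)\cap\ND(Y)$ so that Corollary~\ref{cor:pa_set} applies, and once $\PA(Y)$ is known to be a candidate, pairwise non-nestedness plus the equality case of Corollary~\ref{cor:pa_set} makes $\PA(Y)$ the strict $\MSE$ minimizer among candidates. That bookkeeping is correct, and is spelled out more carefully than in the paper. But the step you defer---that no proper subset of $\PA(Y)$ is invariant, so that $\PA(Y)$ is itself minimally invariant and survives the skipping rule---is the entire content of the theorem, and your proposal does not prove it. Neither does the paper: its proof disposes of this point in a single unsupported clause (``$\PA(Y)$ will always be in the list because no subset of $\PA(Y)$ is invariant'').

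Your instinct that this step ``can genuinely fail'' is correct, and no choice of an ancestrally minimal $p$ or rerouting of the path from $E$ to $p$ can repair it: under the stated hypothesis $\PA(Y)\subset\DE(E)$ the claim is false. Take the DAG $E\to X_1$, $X_1\to X_2$, $X_2\to Y$, $X_1\to Y$, which satisfies every assumption of Section~\ref{ssec:assumptions} and has $\PA(Y)=\{X_1,X_2\}\subset\DE(E)$. Every path from $E$ to $Y$ passes through the non-collider $X_1$, so $\{X_1\}$ is invariant (while $\emptyset$ and $\{X_2\}$ are not); with exact tests, Algorithm~\ref{alg:mmse} stores $\{X_1\}$, then skips $\PA(Y)$ because it is a strict superset of a stored candidate, and returns $\{X_1\}\neq\PA(Y)$---even though $\MSE(\{X_1\})>\MSE(\PA(Y))$, the minimization runs only over stored candidates. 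Here $\PA(Y)\setminus\{X_1\}=\{X_2\}$, and the only directed path from $E$ to $X_2$ runs through $X_1\in\mathbf{S}'$: exactly the obstruction you anticipated. The argument (yours and the paper's) does close under the stronger hypothesis $\PA(Y)\subset\CH(E)$, since then for any $p\in\PA(Y)\setminus\mathbf{S}'$ the path $E\to p\to Y$ has no internal node other than $p$ and cannot be blocked by $\mathbf{S}'$ (modulo the faithfulness-type genericity needed anywhere one passes from d-connection to dependence, which the paper also never states). So your write-up is incomplete at its crux, but the missing step is not a lemma you failed to find---it is a false claim under the theorem's stated hypotheses, and your structural analysis located the flaw in the paper's own proof precisely.
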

Theorem~\ref{prop:identifiability} follows from Corollary~\ref{cor:pa_set} and Lemma~\ref{lem:restrict}.
The loop in Algorithm~\ref{alg:mmse} finds invariant subsets of $\DE(E)\cap\ND(Y)$ which include $\PA(Y)$ when $\PA(Y)\subset \DE(E)$.
Amongst these subsets, $\PA(Y)$ has minimum MSE and is the most compact.
The solution is unique (no other subset has the same MSE and cardinality) because other subsets with minimum MSE are all strict supersets of $\PA(Y)$ so they have larger cardinality than $\PA(Y)$.

Lemma~\ref{lem:restrict} implies that for any invariant set $\mathbf{S}_1$ that is not a subset of $\DE(E)\cap\ND(Y)$, there exists an invariant set $\mathbf{S}_2$ that is a subset of $\mathbf{S}_1$ (having smaller cardinality than $\mathbf{S}_1$'s) and of $\DE(E)\cap\ND(Y)$.
Since Algorithm~\ref{alg:mmse} iterates through the sets of variables in increasing cardinality order, it will first find $\mathbf{S}_2$ and add $\mathbf{S}_2$ to the list.
It will later exclude $\mathbf{S}_1$ because $\mathbf{S}_2$ is a subset of $\mathbf{S}_1$.
Thus, all invariant sets that are not subsets of $\DE(E)\cap\ND(Y)$ will be excluded.
$\PA(Y)$ will always be in the list because no subset of $\PA(Y)$ is invariant.

\begin{algorithm}[tb]
\DontPrintSemicolon
    \KwIn{$E,X_1,X_2,\dots,X_d,Y, \mathsf{MaxDepth}$}
    \KwOut{Potential parents of $Y$}
    \lIf{$is\_invariant(E, \mathbf{X}, Y, \emptyset)$} { \Return $\emptyset$ }
    \tcp{Find invariant set}
    $\mathbf{S} = \{X_1,\dots,X_d\}$ \\
    \While{not $is\_invariant(E, \mathbf{X}, Y, \mathbf{S})$}{
        $\mathsf{candidates} = \{\mathbf{S}'\subset \mathbf{S}: |\mathbf{S}'| \geq |\mathbf{S}| - \mathsf{MaxDepth}\}$ \\
        \lIf{no $\mathsf{candidates}$ } { \Return $\emptyset$ }
        $\mathbf{S} = \argmin_{\mathbf{S'}\in \mathsf{candidates}} \mathsf{statDependency}(Y, \mathbf{S}'$)
    }
    \tcp{Prune invariant set}
    \While{True} {
        $\mathsf{candidates} = []$ \\
        \For{$Z \in \mathbf{S}$} {
            \If{$is\_invariant(E, \mathbf{X}, Y, \mathbf{S}\setminus\{Z\})$} {
                $\mathsf{candidates} = \mathsf{candidates} + [\mathbf{S}\setminus\{Z\}]$
            }
        }
        \lIf{no $\mathsf{candidates}$} { \Break }
        \lElse{ $\mathbf{S} = \argmin_{\mathbf{S}'\in \mathsf{candidates}} \MSE(\mathbf{S}')$ }
    }
    \Return $\mathbf{S}$
\caption{\MNAME}\label{alg:fast}
\end{algorithm}

\begin{figure*}[t]
\centering
\includegraphics[width=.8\linewidth]{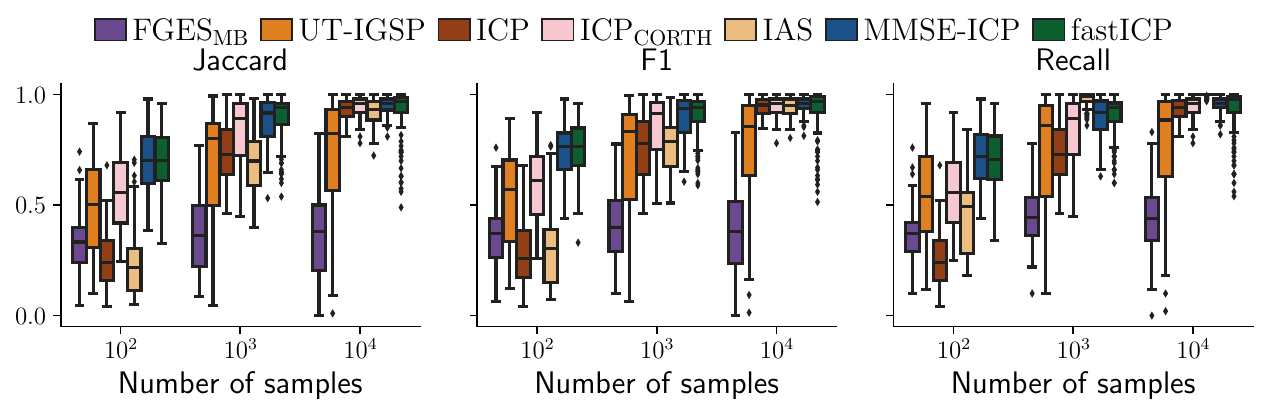}
\caption{Performance when $N_{\text{int}}=d=6$ (Table~\ref{tbl:sim_properties}, No. 1). Linear simulation. Reference set: $\PA(Y)$.}\label{fig:main_d6f}
\end{figure*}

\subsection{Faster search sequence}\label{ssec:ficp}
Although Algorithm~\ref{alg:mmse} should be more accurate than ICP, it still needs to run an exponential number of invariance tests.
We propose Algorithm~\ref{alg:fast}, which is faster and consists of two stages.

Stage 1 aims to find the largest invariant set $\mathbf{S}$ that includes $\PA(Y)$.
Starting with the set of all variables, it removes potential colliders and their descendants, which might be unblocking paths from $E$ to $Y$, using the heuristic presented in~\citet{cheng1998learning}.
The heuristic assumes that blocking a path by removing some nodes from the conditioning set decreases the statistical dependency between $E$ and $Y$~\citep{cheng1998learning}.
In Algorithm~\ref{alg:fast}, $\mathsf{MaxDepth}$ is a hyper-parameter for the maximum number of nodes to be removed at one time.
The statistical dependency can be measured using the invariance test of~\citet{peters2016causal} where the lower the statistical dependency, the higher the probability of being invariant.
Stage 2, based on Proposition~\ref{prop:two}, removes variables from $\mathbf{S}$ one-by-one while maintaining invariance.
The algorithm terminates when $\mathbf{S}=\PA(Y)$.

\begin{proposition}\label{prop:two}
We are given that $\PA(Y)\subset\DE(E)$, $\mathbf{S}$ is invariant and has the lowest MMSE amongst subsets with the same cardinality, and $\PA(Y) \subset \mathbf{S}$.
For each $Z\in\mathbf{S}$, let $\mathbf{S}_{{-}Z} := \mathbf{S}\setminus\{Z\}$.
If no subset $\mathbf{S}_{{-}Z}$ of $S$ is invariant, then $\mathbf{S}=\PA(Y)$.
Otherwise, $\PA(Y) \subset \argmin_{\text{invariant}\;\mathbf{S}_{{-}Z}} \MSE(\mathbf{S}_{{-}Z})$.
\end{proposition}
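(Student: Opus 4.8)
The plan is to reduce everything to the clean regime where $\mathbf{S}\subseteq\ND(Y)$, in which both conclusions follow almost immediately from Corollary~\ref{cor:pa_set}, and then to argue that the hypotheses let us work in (or reduce to) that regime. The workhorse is a structural fact that I would establish first by a short d-separation argument: \emph{any set $\mathbf{T}$ with $\PA(Y)\subseteq\mathbf{T}\subseteq\ND(Y)$ is invariant.} Take any path between $E$ and $Y$. If it enters $Y$ through a parent $P\in\PA(Y)$, then $P$ is a non-collider lying in $\mathbf{T}$, so the path is blocked at $P$. If it leaves $Y$ through a child, then, tracing toward $E$ and using that $E$ is exogenous (so the $E$-end of the path is an out-arrow), the first collider encountered along the way is a descendant of $Y$; since $\mathbf{T}\subseteq\ND(Y)$, neither that collider nor any of its descendants lies in $\mathbf{T}$, so the path is blocked there. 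Hence $Y\indep E\mid\mathbf{T}$. Combined with Corollary~\ref{cor:pa_set}, every such $\mathbf{T}$ also satisfies $\MSE(\mathbf{T})=\MSE(\PA(Y))=\sigma_Y^2$, while $\MSE(\mathbf{T}')>\sigma_Y^2$ for any $\mathbf{T}'\subseteq\ND(Y)$ with $\PA(Y)\not\subseteq\mathbf{T}'$.

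Granting $\mathbf{S}\subseteq\ND(Y)$, Part 1 is immediate: if $\mathbf{S}\ne\PA(Y)$, then since $\PA(Y)\subseteq\mathbf{S}$ there is some $Z\in\mathbf{S}\setminus\PA(Y)$, whence $\PA(Y)\subseteq\mathbf{S}_{{-}Z}\subseteq\ND(Y)$, so $\mathbf{S}_{{-}Z}$ is invariant by the structural fact, contradicting the hypothesis that no single-element removal is invariant. Therefore $\mathbf{S}=\PA(Y)$.

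For Part 2, again granting $\mathbf{S}\subseteq\ND(Y)$, every candidate $\mathbf{S}_{{-}Z}$ is a subset of $\ND(Y)$, so Corollary~\ref{cor:pa_set} applies verbatim: an invariant $\mathbf{S}_{{-}Z}$ attains $\MSE(\mathbf{S}_{{-}Z})=\sigma_Y^2$ exactly when $\PA(Y)\subseteq\mathbf{S}_{{-}Z}$ (equivalently $Z\notin\PA(Y)$) and attains strictly larger error otherwise. In this branch $\mathbf{S}\supsetneq\PA(Y)$ (if $\mathbf{S}=\PA(Y)$, dropping a parent in $\DE(E)$ would break invariance, putting us in Part 1), so some $Z\notin\PA(Y)$ exists and its removal is invariant with error exactly $\sigma_Y^2$, whereas any parent-dropping removal has strictly larger error. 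Hence $\argmin_{\text{invariant}\;\mathbf{S}_{{-}Z}}\MSE(\mathbf{S}_{{-}Z})$ has error $\sigma_Y^2$ and must contain $\PA(Y)$, which is the claim.

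The main obstacle is justifying the reduction to $\mathbf{S}\subseteq\ND(Y)$: the hypotheses (invariance, $\PA(Y)\subseteq\mathbf{S}$, and minimality of MMSE at fixed cardinality) do not by themselves forbid $\mathbf{S}$ from containing descendants of $Y$, since a child of $Y$ can push the MMSE below $\sigma_Y^2$ without breaking invariance. To close this gap I would invoke Lemma~\ref{lem:restrict}, which guarantees that $\mathbf{S}\cap(\DE(E)\cap\ND(Y))$ is invariant and (by Corollary~\ref{cor:pa_set}) has error $\sigma_Y^2$, and then show that a minimal-MMSE invariant removal can never trade a direct parent for a descendant: a parent $P$ carries a direct edge $P\to Y$ and so cannot be screened off from $Y$ by any conditioning set, whence the unbounded-support/conditional-variance argument used in the proof of Corollary~\ref{cor:pa_set} shows that dropping $P$ incurs a strictly positive increase in MMSE, while a suitable non-parent removal preserves it. Making this swap argument fully rigorous in the presence of several descendants and possible alternative paths into $Y$ is the delicate step; I would handle it by comparing $\mathbf{S}_{{-}P}$ against $\mathbf{S}$ with $P$ restored and an ancestral blocker removed, and appealing to the equality condition of Lemma~\ref{lem:inequality} to conclude that the parent-dropping removal is strictly dominated.
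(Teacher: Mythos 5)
Your attempt has a genuine gap, and it sits exactly where you flagged it --- but the problem is worse than ``delicate.'' The reduction to the regime $\mathbf{S}\subseteq\ND(Y)$ is not a removable technicality: the proposition is applied in stage 2 of Algorithm~\ref{alg:fast}, where $\mathbf{S}$ starts as (essentially) the full covariate set and typically contains children of $Y$. In that case invariant removals can have $\MSE$ strictly below $\sigma_Y^2$, so your dichotomy ``an invariant $\mathbf{S}_{-Z}$ has error $\sigma_Y^2$ iff it retains all parents'' is unavailable, and Corollary~\ref{cor:pa_set} cannot be applied to the candidates at all. The paper's own proof never reduces to $\ND(Y)$; it runs a swap argument directly on the general set: if the minimizer $\mathbf{S}_{-Z'}$ dropped a parent $Z'$, invariance forces some $T\in\mathbf{S}_{-Z'}$ on the blocking path through $Z'$, and comparing $\MSE(\mathbf{S}_{-T})$ against $\MSE(\mathbf{S}_{-Z'})$ via Lemma~\ref{lem:inequality} and Corollary~\ref{cor:pa_set} contradicts the definition of the argmin. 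Your closing sketch gestures at this swap but never carries it out, so the attempt is incomplete precisely at the step that constitutes the content of the proposition.

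Moreover, one step you do assert is false. You claim that if $\mathbf{S}=\PA(Y)$ then dropping a parent in $\DE(E)$ breaks invariance (this is how you force $\mathbf{S}\supsetneq\PA(Y)$ in the second branch). Consider the graph $E\to P'\to Z\to Y$ together with the edge $P'\to Y$, so that $\PA(Y)=\{P',Z\}\subseteq\DE(E)$ and $\mathbf{S}=\{P',Z\}$ satisfies every hypothesis (it is the unique subset of its cardinality, hence trivially MMSE-minimal). Dropping the parent $Z$ leaves $\{P'\}$, which \emph{is} invariant, since $P'$ blocks both paths from $E$ to $Y$; yet $\PA(Y)\not\subseteq\{P'\}$. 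So your branch dichotomy fails even inside your ``clean regime'' ($P'$ and $Z$ are both non-descendants of $Y$). It is worth noting that the same graph also defeats the paper's swap step --- there $Z'=Z$, $T=P'$, and $\mathbf{S}_{-T}=\{Z\}$ is \emph{not} invariant because $P'$ is also needed to block the direct path $E\to P'\to Y$ --- and indeed the conclusion of the proposition itself fails on this graph. In other words, the ``delicate step'' you deferred cannot be closed as stated by any strategy: whenever one parent screens another parent off from $E$, both your argument and the paper's break down, and a correct statement needs an additional hypothesis excluding that configuration.
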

\begin{proof}
If no subset $\mathbf{S}_{{-}Z}$ of $S$ is invariant, all nodes of $\mathbf{S}$ must be on blocking paths because removing any node $Z$ from $\mathbf{S}$ leads to loss of invariance (unblocks a path from $E$ to $Y$).
If multiple nodes are on the same blocking path, one node (e.g.~$W$) can be removed and $\mathbf{S}_{{-}W}$ is still invariant which is contradictory.
Thus, $Z\in\PA(Y)$ because otherwise, exchanging $Z$ with the parent in the same blocking path would result in a subset with the same cardinality but with smaller MMSE, which is contradictory.
Since $Z\in\PA(Y),\forall Z\in\mathbf{S}$, then $\mathbf{S} = \PA(Y)$.

Suppose there exists a $Z \in\mathbf{S}$ where $\mathbf{S}_{{-}Z}$ is invariant. 
Let's assume that $\PA(Y)\nsubseteq\mathbf{S}_{{-}Z'}{:=}\argmin_{\text{invariant}\;\mathbf{S}_{{-}Z}} \MSE(\mathbf{S}_{{-}Z})$.\\
Since $\PA(Y)\nsubseteq\mathbf{S}_{{-}Z'}$ but $\PA(Y)\subset\mathbf{S}$ so $Z'\in\PA(Y)$.
Besides, $\mathbf{S}_{{-}Z'}$ is invariant but $Z'\notin\mathbf{S}_{{-}Z'}$, so there must be another node $T\in\mathbf{S}_{{-}Z'}$ that is on the same blocking path as $Z'$.\\
Let $\mathbf{S}'{:=}\mathbf{S}\setminus\{Z',T\}$.
Consequently, $\mathbf{S}_{{-}Z'}=\mathbf{S}\setminus\{Z'\}=\mathbf{S}'\cup\{T\}$ and $\mathbf{S}_{{-}T}=\mathbf{S}\setminus\{T\}=\mathbf{S}'\cup\{Z'\}$.\\
Since $T$ and $Z'$ are on the same blocking path, $\mathbf{S}_{{-}T}$ is also invariant.
In addition, because $Z'\in\PA(Y)$, $T$ must precede $Z'$, i.e.~$Y\indep T|\mathbf{S}'\cup\{Z'\}\Leftrightarrow Y\indep T|\mathbf{S}_{{-}T}$.
As a result, by Lemma~\ref{lem:inequality} and Corollary~\ref{cor:pa_set},
\[ \MSE(\mathbf{S}_{{-}T})=\MSE(\mathbf{S}_{{-}T}\cup\{T\})=\MSE(\mathbf{S})=\MSE(\mathbf{S}_{{-}Z'}\cup\{Z'\}) < \MSE(\mathbf{S}_{{-}Z'}). \]
This is contradictory because $\MSE(\mathbf{S}_{{-}Z'})<\MSE(\mathbf{S}_{{-}T})$ by definition.
Hence, $\PA(Y)\subset\mathbf{S}_{{-}Z'}$.
\end{proof}

\subsection{Complexity Analysis}
For a problem with $V$ variables, the complexity of ICP and \mICP~is $O(2^V)$ as the number of invariance tests run is exponential.
Stage 1 of \MNAME~is $O(V*2^{\mathsf{MaxDepth}})$ because of the number of candidates checked grows exponentially with $\mathsf{MaxDepth}$.
Stage 2 is $O(V^2)$.
Thus, the complexity of \MNAME~is $O(V*2^{\mathsf{MaxDepth}} + V^2)$.
When $\mathsf{MaxDepth}$ is less than $V$, \MNAME~is faster than ICP.

\section{Simulation Experiments}
\subsection{Data}\label{ssec:sim_data}
We generate synthetic datasets according to the assumptions in Section~\ref{ssec:assumptions} (additive noise model with i.i.d.~Gaussian noise).
Similar to~\citet{peters2016causal,mogensen2022invariant}, we consider 2-environment setups (observational and interventional).
In each setup, 100 graphs are randomly generated.
In a graph, beside the target node $Y$ and the environment indicator node $E$, there are $d$ additional nodes $\{X_1,\dots,X_d\}$.
The edges from $E$ to a subset of $\{X_1,\dots,X_d\}$ specify the nodes that may be intervened on.
The number of interventions ($N_{\text{int}}$) varies between different setups.
We do not enforce the constraint that $\PA(Y)\subset\DE(E)$ to test the robustness of the proposed methods when this assumption is invalid.
Values of a node is generated based on its parents' values, through linear or nonlinear functions.
Beside linear simulations whereby all functions are linear, we also include nonlinear simulations (see Appendix~\ref{app:sim_details}).
For each graph, 50 sets of coefficients of the generative functions are drawn randomly.
For each set of coefficients, we sample 4 datasets with different sample sizes ($10^2$, $10^3$, $10^4$, $10^5$).
The data are standardized along the causal order to prevent shortcut learning~\citep{reisach2021beware}.
In the interventional environment, interventions are applied to a random subset of children of $E$.
We consider 3 types of interventions: (1) perfect intervention which severs all causal dependencies from parents, (2) imperfect intervention which modifies causal relations between a node and its parents, and (3) noise intervention in which the intervened variable's noise variance changes~\citep{cooper1999causal,peters2016causal}.
The different setups are summarized in Table~\ref{tbl:sim_properties}.

\begin{table}
    \caption{Different setups with different number of nodes ($X_1,\dots,X_d,Y,E$), graph densities, number of interventions ($N_{\text{int}}$), and type of interventions.
    MB: Markov Blanket}\label{tbl:sim_properties}
\centering
\begin{tabular}{lrrrrl}
\toprule
No.  & $d$  & Density & MB size & $N_{\text{int}}$ & Type \\
\midrule
    1.   & 6    & 0.240 &  2.98 & 6 & Perfect \\
    2.   & 6    & 0.145 &  2.58 & 1,2,3 & Perfect \\
    3.   & 6    & 0.158 &  2.89 & 1 & Imperfect \\
    4.   & 6    & 0.153 &  2.86 & 1 & Noise \\
    5.   & 100  & 0.010 &  3.30 & 1--5 & Perfect \\
    6.   & 100  & 0.050 & 26.99 & 1--5 & Perfect \\
\bottomrule
\end{tabular}
\end{table}

\begin{figure*}[t]
    \begin{subfigure}{\linewidth}
        \centering
        \includegraphics[width=.8\linewidth]{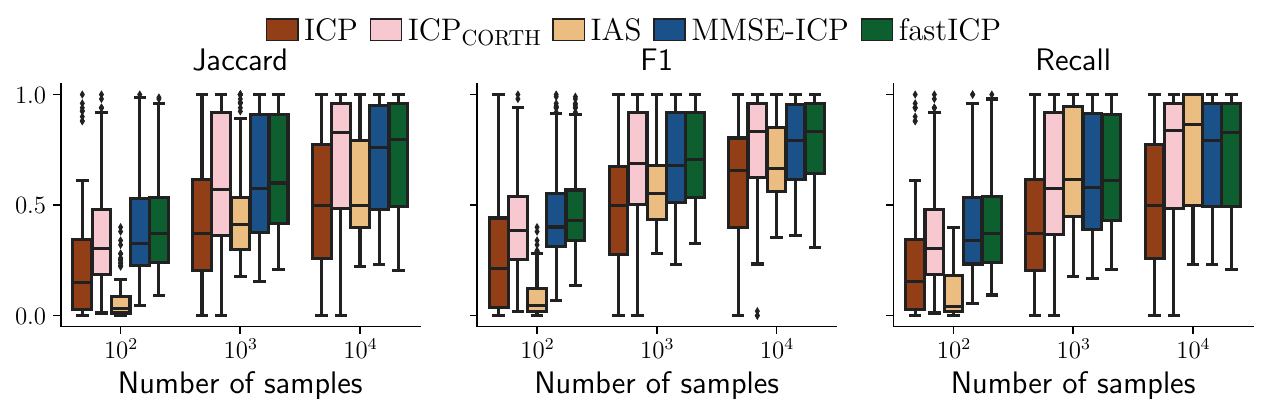}
        \caption{Reference set: $\PA(Y)$}
    \end{subfigure}
    \begin{subfigure}{\linewidth}
        \centering
        \includegraphics[width=.8\linewidth]{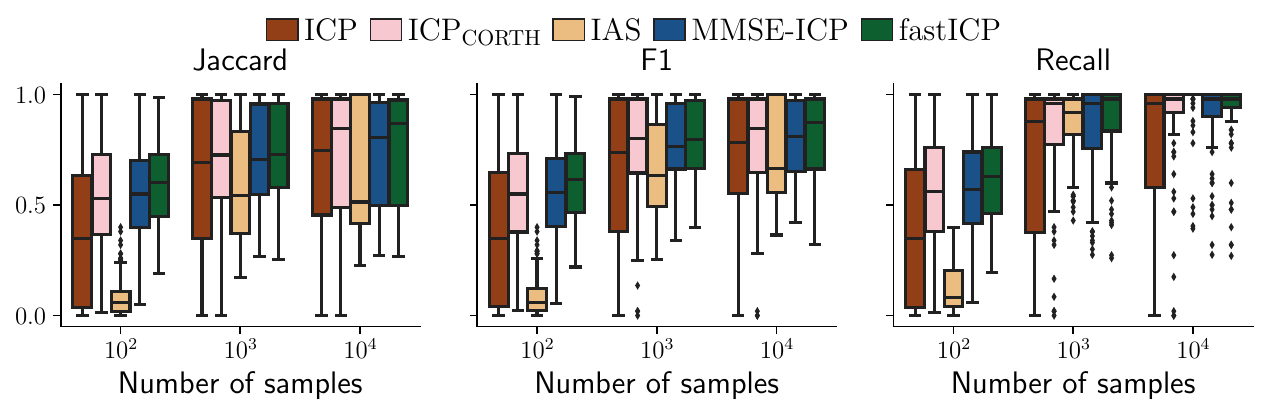}
        \caption{Reference set: \SStarFull}
    \end{subfigure}
    \caption{Performance when $N_{\text{int}}{=}1; d{=}6$ (Table~\ref{tbl:sim_properties}, No. 2). Linear simulation. See Appendix~\ref{app:linear} for results when $N_{\text{int}}{=}2$ or $N_{\text{int}}{=}3$.}\label{fig:main_d6}
\end{figure*}

\subsection{Baselines and implementation details}
We compare \mICP~and \MNAME~against
ICP~\citep{peters2016causal},%
IAS~\citep{mogensen2022invariant},%
fGES-MB~\citep{ramsey2017million},%
UT-IGSP~\citep{squires2020permutation},%
and ICP-CORTH.
ICP-CORTH uses ICP to remove false positives in CORTH's output~\citep{soleymani2022causal}.
Since CORTH assumes $Y$ is a linear combination of its parents (similar to the simulation setup), ICP-CORTH is a competitive baseline.
To test for invariance, \mICP~and \MNAME~use the same test as ICP and IAS.
Specifically, the test checks whether the mean and variance of the prediction residuals (of a regression model) is equal across environments.
Linear regression is used for linear simulations while gradient boosted tree is used for nonlinear simulations (see Appendix~\ref{app:invariance_test}). 
To keep the runtime of the invariance test comparable to ICP and IAS, $\MSE$ is estimated by averaging the prediction residuals (see Appendix~\ref{app:mirror} for experiments to test the robustness of $\MSE$ estimation).

The $\mathsf{MaxDepth}$ hyper-parameter is set at 2.
For ICP, IAS, UT-IGSP, \mICP, and \MNAME, the significance level $\alpha$ is set at 0.05.
The parameters of IAS ($C$ and $\alpha_0$) are set according to the original paper~\citep{mogensen2022invariant}.
Since it is intractable to search exhaustively using ICP, IAS and \mICP~for large $d$, their search scopes are restricted in these cases.
In particular, ICP and \mICP~only search within an estimated MB of size 10.
The 10 variables that are considered for further analysis are determined via L2-boosting~\citep{friedman2001greedy,buhlmann2003boosting,hothorn2010model}.
IAS only tests for sets with a single element.

\emph{Jaccard similarity} and \emph{F1-score} are used as metrics.
The prediction is compared against a reference set, which could be either (1) the full set parents $\PA(Y)$ and (2) the set of perturbed parents \SStarFull.
Since IAS finds ancestors instead of parents, we also look at the parents' \emph{Recall} rate.

\begin{figure*}[t]
    \begin{subfigure}{\linewidth}
        \centering
        \includegraphics[width=.8\linewidth]{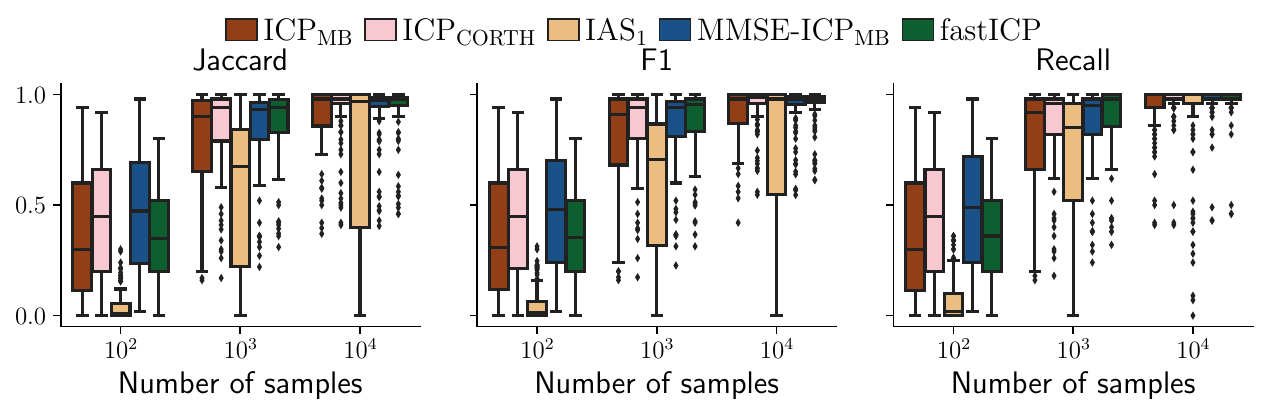}
        \caption{Large sparse graphs (Table~\ref{tbl:sim_properties}, No. 5)}\label{fig:main_d100}
    \end{subfigure}
    \begin{subfigure}{\linewidth}
        \centering
        \includegraphics[width=.8\linewidth]{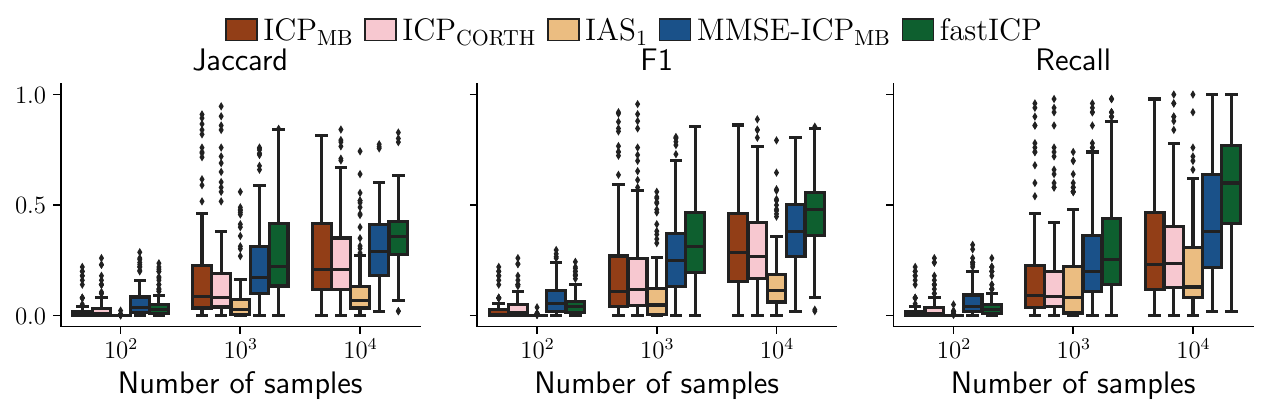}
        \caption{Large dense graphs (Table~\ref{tbl:sim_properties}, No. 6)}\label{fig:main_d100d}
    \end{subfigure}
    \caption{Performance for large graphs. Reference set: \SStarAbrr. Also see Figure~\ref{fig:app_d100} and~\ref{fig:app_d100d} in the Appendix.}
\end{figure*}

\begin{figure}[b]
    \centering
    \includegraphics[width=\linewidth]{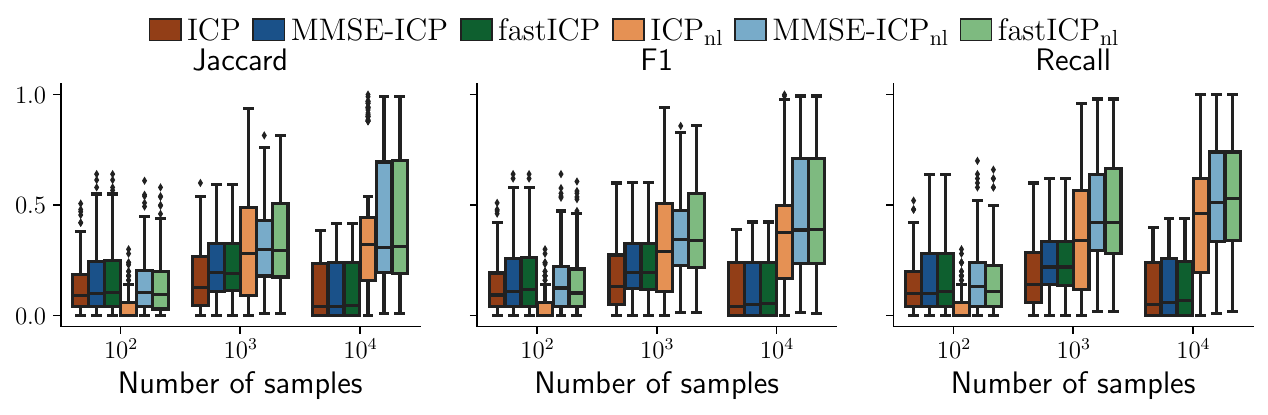}
    \caption{Performance for nonlinear simulation. $N_{\text{int}}{=}1; d{=}6$. Reference set: \SStarAbrr. $nl$: using nonlinear regression in invariance test.
    Also see Figure~\ref{fig:app_nl} in the Appendix.
    }\label{fig:main_nl}
\end{figure}

\subsection{Results from linear simulations}
When, $N_{\text{int}}=d=6$, invariance-based algorithms should be able to discover all parents.
Figure~\ref{fig:main_d6f} shows their results in this setting.
With sufficient samples, invariance-based algorithms outperform fGES-MB (observational constraint-based CD) and UT-IGSP.
When the number of interventions is limited (i.e.~$N_{\text{int}}<d$, Table~\ref{tbl:sim_properties}, No. 2--6), identifying all direct parents using invariance is more challenging since some invariance subsets may be strict subset of the parents, i.e.~$|\SStarAbrr|<|\PA(Y)|$.
Due to the strict inequality, we only benchmark our approaches against invariance-based algorithms.
For perfect interventions, \mICP~and \MNAME~achieve similar performance and outperform the baselines in both Jaccard similarity and F1-score (Figure~\ref{fig:main_d6}).
The recall of our approaches is the same as IAS's and higher than ICP's.
Figure~\ref{fig:main_d6} shows that our approaches often found most of the perturbed parents (\SStarAbrr).
The same trends are observed for imperfect interventions (Figure~\ref{fig:app_d6s}), noise interventions (Figure~\ref{fig:app_d6n}), and large sparse graphs (Figure~\ref{fig:main_d100}).
They hold for large sparse graphs since the estimated MB of size 10 is sufficient to cover the true MB (see Table~\ref{tbl:sim_properties}).
Although \mICP~and \MNAME~should perform similarly when the former can exhaustively test all subsets, in practice, their performance may differ as they test the subsets in different orders (increasing cardinality order for \mICP~and decreasing cardinality order for~\MNAME).
With imperfect invariance test, the order of testing may impact the performance since some tests may more likely fail than others.

For large dense graph (Figure~\ref{fig:main_d100d}), \mICP~and \MNAME~still outperform the baselines.
Since the estimated MB is much smaller than the true MB, methods that cannot search exhaustively will miss out many parents.
ICP-CORTH is worse than ICP probably because CORTH's assumption that the target Y has no children is more likely to be wrong.
When the graph is large and dense, \mICP~cannot search exhaustively so the ability to search through all nodes gives \MNAME~an edge over \mICP.

\subsection{Computation time analysis}
To analyze the runtime complexity of invariance-based algorithms, we recorded the time each algorithm took when the number of nodes ($d$) varies.
The data were generated from linear SCMs as outlined in Section~\ref{ssec:sim_data}, but with varying $d$ and $N$ (number of samples) fixed at 1000.
Table~\ref{tbl:runtime} reports the numbers of seconds elapsed when executing on an AMD EPYC 7642 CPU core (@ 2.3GHz).
For this benchmark, since the official ICP implementation was in R, we employed a re-implementation in Python for a fair comparison.
The algorithms search through the full set of covariates (i.e.~no pre-selection of variables using Markov Blanket estimation).
\begin{table}[h]
    \caption{The average runtime in seconds of different algorithms for varying $d$.}\label{tbl:runtime}
\centering
\begin{tabular}{llrrrrrr}
\toprule
    Method & Language & $d=6$ & $d=9$ & $d=12$ & $d=15$ & $d=18$ & $d=21$ \\
\midrule
    ICP       & R        & $0.16_{\pm 0.01}$ & $1.78_{\pm 0.06}$ &$17.66_{\pm 0.69}$ &$339.5_{\pm 53.2}$ & $>3600$  & $>3600$ \\
\midrule
    ICP       & Python   & $0.06_{\pm 0.01}$ & $0.41_{\pm 0.08}$ & $3.28_{\pm 1.06}$ & $30.2_{\pm 10.8}$ & $224.6_{\pm 67.2}$ & $1794_{\pm 593.}$ \\
    MMSE-ICP  & Python   & $0.06_{\pm 0.01}$ & $0.42_{\pm 0.08}$ & $3.30_{\pm 1.06}$ & $29.9_{\pm 10.6}$ & $225.5_{\pm 66.1}$ & $1806_{\pm 618.}$ \\
    fastICP   & Python   & $0.03_{\pm 0.00}$ & $0.07_{\pm 0.00}$ & $0.12_{\pm 0.00}$ & $0.20_{\pm 0.01}$ & $0.28_{\pm 0.01}$  & $0.39_{\pm 0.02}$ \\
\bottomrule
\end{tabular}
\end{table}

From these runtime values, we confirm that ICP and MMSE-ICP both have exponential complexity in $d$.
Furthermore, the additional computation overhead of MMSE-ICP as compared to ICP is negligible.
In contrast, the runtime of fastICP is polynomial (quadratic) in $d$ so fastICP is much more scalable than ICP and MMSE-ICP.

\subsection{Results from nonlinear simulations}
For nonlinear simulation, our approaches are still better than ICP (Figure~\ref{fig:main_nl} and Appendix~\ref{app:nonlinear}).
As predicted by~\citet{heinze2018invariant}, it is sometime possible to use the linear invariance test (invariance test with linear regression) to find causal parents in nonlinear simulations.
However, it is preferable to use a nonlinear invariance test to obtain more accurate results if runtime is not a constraint.

\begin{figure}[t]
\centering
\includegraphics[width=.5\linewidth]{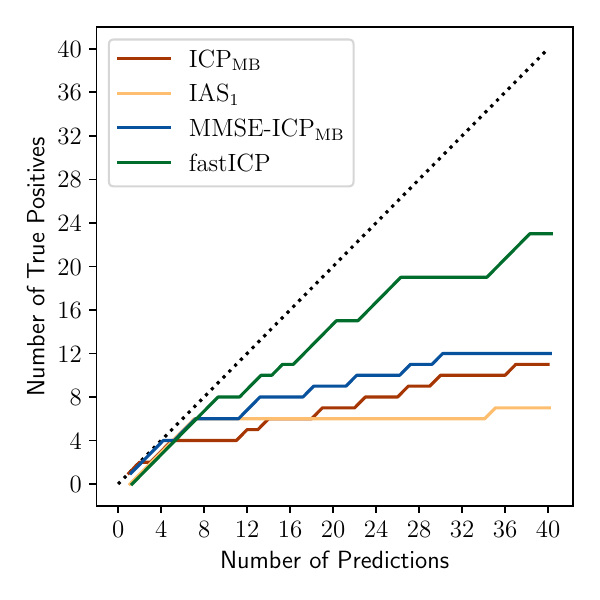}
\caption{Performance on genes' parents prediction. The dotted line delineates perfect accuracy.}\label{fig:Kemmeren}
\end{figure}

\section{Experiment on gene expression data}
\subsection{Data}
We apply our methods to a real-world large-scale yeast gene expression dataset with 6170 variables~\citep{kemmeren2014large}.
There are 160 observational samples and 1479 interventional samples.
Each interventional sample corresponds to an experiment where single $X_k$ has been perturbed.
Following~\citet{peters2016causal,mogensen2022invariant}, we assume that a direct causal effect $X{\rightarrow}Y$ exists (true positive) if the expression level of gene $Y$ after intervening on gene $X$ lies in the 1\% lower or upper tail of the observational distribution of gene $Y$.

Since neither the ground-truth graph nor a separate set of validation data is available, we must use the same data for validation.
Hence, we employ the same cross-validation scheme used in the original ICP paper to prevent information leakage~\citep{peters2016causal}.
Specifically, when predicting whether $X_k$ is a parent of $Y$, we do not include the sample when intervening on $X_k$ (if the sample exists) in the data used for inference.
The interventional samples are split into 3 folds.
In each fold, two thirds of the interventional samples not containing $X_k$ are used as interventional data, and remaining interventional data are used for validation.
Thus, for each target, we need to run the algorithms the same number of times s the number of folds.
Additionally, when looking for potential causes of $Y$, we exclude samples corresponding to intervention on $Y$ (if it exists).

\subsection{Baselines and implementation details}
ICP and IAS are used as baselines as they give confidence estimates for individual predicted parents/ancestors.
The same linear invariance test is used although the threshold is set at $\alpha=0.01$, following~\citet{peters2016causal}.
L2-boosting~\citep{friedman2001greedy,buhlmann2003boosting,hothorn2010model} is used to estimate the MB of size 10 for ICP and \mICP.
Even though \MNAME~can search exhaustively in simulations of 100 variables, it would be too slow for this problem. 
Hence, we restrict the \MNAME~search scope to an estimated MB of 100 variables.
ICP-CORTH is excluded because it takes more than 3 days for one gene so obtaining the result for 6170 genes would take more than 18000 days.
Instead scoring the methods' output at fixed thresholds, we ranked the set of predicted causal relations and score the most confident predicted relations.

\subsection{Results}
Figure~\ref{fig:Kemmeren} shows how the number of true positives vary as the methods are allowed to make more predictions.
When the number of allowed predictions is lower than 8, the performance of the methods are very similar.
However, when the number of predictions is more than 10, \mICP~is generally more accurate than ICP.
Moreover, \MNAME~is the best approach for this task as it can afford to search more widely for the parents of each gene.
\MNAME~obtained 20 true positives when making 30 predictions (66\% success probability), setting a new state-of-the-art (SOTA) for this benchmark.

\section{Discussion}
While using only observational data for causal discovery (CD) is common, interventional data with mechanism changes are very valuable for inferring causal relations.
In fact, changes (e.g.~different equipments, locations, demographics, weathers) often arise naturally in distribution-shift data.
Given their abundance, harnessing changes for CD would reveal insights useful for understanding and manipulating complex systems.
However, exploiting changes in general is difficult as they are often imperfect interventions (merely altering the mechanism generating a variable) affecting some unknown variables.
In contrast, controlled experiments are perfect interventions that fix designated variables to predetermined values~\citep{tian2001causal,eberhardt2007interventions}.
As such, invariance-based CD methods are appealing because they can work with unknown interventions.

Like other invariance-based CD methods, ours also make no assumption about where interventions are and what precisely the effect of interventions may be.
We only assume that there is no intervention on the target.
This make our approaches appealing in many problems whereby specifying what an intervention or change of environment actually means is difficult.
In addition, unlike ICP which needs a sufficient number of interventions to identify all parents~\citep{rosenfeld2021risks,mogensen2022invariant}, our approaches can identify all parents even if they are only indirectly effected by interventions.
Thus, our methods can be applied to problems where it is logistically challenging to exhaustively intervene on many variables~\citep{uhler2024building}.
Besides, our work may have implications for robust representation learning.
Building ML models with invariant representations has become popular recently since they can generalize better to distribution-shift data~\citep{arjovsky2019invariant,rosenfeld2021risks,nguyen2024robust,nguyen2024adapting}.
It may be possible to build even more accurate invariant representations by identifying causal variables~\citep{subbaswamy2019preventing}.
Thus, as an efficient way to find causal variables from distribution-shift data, \MNAME~can be a key to more adaptive ML models.

In this work, we proposed two algorithms: \mICP~which has similar runtime but better recall than ICP; and \MNAME~which is more scalable.
Both approaches outperform multiple baselines in simulations.
\MNAME~also achieves SOTA result on the large-scale \citet{kemmeren2014large}'s benchmark.
\mICP~and \MNAME~are based on general theoretical results are orthogonal to the implementations of invariance test and MSE estimation.
There are several unaddressed questions that are left as future work.

First, despite its speed, \MNAME~still has worst-case exponential complexity that is controlled by the MaxDepth parameter.
The larger the MaxDepth, the closer \MNAME~is to ICP (i.e.~testing exponential number of subsets).
Hence, setting a smaller MaxDepth will result in greater speed-up, with an increased risk for inaccurate results.
In our experiments, MaxDepth of 2 seems to be a good trade-off between accuracy and speed on the simulations and the real data.
Of course, there may be adversarial case where MaxDepth parameter needs to be increased to yield correct results.
However, these adversarial cases seem rare in real data.
Although additional speed-up can be achieved by testing multiple subsets at once using amortization techniques such as~\citep{nguyen2024knockout}, whether there is a general invariance-based polynomial-complexity algorithm remains an open question.

Second, our algorithms assume accurate invariance test and MMSE estimation.
The theoretical analysis of the proposed algorithms' performance under imperfect invariance test is challenging since unlike ICP, they are greedy algorithms that do not exhaustively test all subsets.
Thus, the performance bounds are problem-specific (dependent on the ground-truth graph) as the invariance test might fail along some greedy optimization path even with perfect MMSE estimation.
In practice, MMSE estimation may be inaccurate as well so we need to account for the probability of making a consequential error in ordering the estimated MMSE values.
MMSE estimation may be imperfect due to insufficient samples or misspecified model class.
Ensuring good MMSE estimation is an empirical exercise that will require careful experimentation.
We recommend users to account for the uncertainty in the predicted MMSE via approaches like bootstrapping on the test data~\citep{raschka2022creating} or double ML~\citep{chernozhukov2018double}.

\subsubsection*{Acknowledgments}
Funding for this project was in part provided by the NIH grants R01AG053949, R01AG064027 and R01AG070988, and the NSF CAREER 1748377 grant.

\bibliography{causality}
\bibliographystyle{tmlr}

\newpage
\appendix
\section{Testing for Invariance}\label{app:invariance_test}
We use the same test for invariance which was used in prior work~\citep{peters2016causal,heinze2018invariant,mogensen2022invariant}. 
Specifically, the test checks whether the means and variances of the average prediction errors (residuals) across all environments are the same.
Algorithm~\ref{alg:invariance_test} show the pseudo code which checks whether $\mathbf{S}$ is an invariant set.
$\mathbf{X}$ and $Y$ are respectively the data of the covariate variables and the target variable combined across all environments.
$E$ is a label vector such that each entry in $E$ corresponds to a data sample and samples belonging to the same environment have the same label.
Checking for any difference in means is done using the t-test.
Checking for any difference in variances is done using the Levene's test~\citep{levene1960robust}.
The regression model used can be linear or nonlinear.
For nonlinear regression model, we used gradient boosted tree~\citep{friedman2001greedy}.
Since high-capacity nonlinear model can overfit, we use cross-validation prediction to avoid overfitting.
The number of cross-validation fold is 10 when there are fewer than 500 samples and is 2 otherwise.

\begin{algorithm}[h]
\DontPrintSemicolon
    \KwIn{$E,\mathbf{X},Y,\mathbf{S},\mathsf{threshold}=0.05$}
    \KwOut{Is $Y\indep E|\mathbf{S}$?}

    $\mathsf{model} = fit(\mathbf{X}[:, \mathbf{S}], Y)$ \\
    $\mathsf{residuals} = Y - predict(\mathsf{model}, \mathbf{X}[:, \mathbf{S}])$ \\

    $\mathsf{pValues} = []$ \\
    \For{$\mathsf{eLabel}$} {
        $\mathsf{inGroup} = \mathsf{residuals}[E{==}\mathsf{eLabel}]$ \\
        $\mathsf{outGroup} = \mathsf{residuals}[E{\neq}\mathsf{eLabel}]$ \\
        $\mathsf{pValue1} = t\_test(\mathsf{inGroup}, \mathsf{outGroup})$ \\
        $\mathsf{pValue2} = levene\_test(\mathsf{inGroup}, \mathsf{outGroup})$ \\
        $\mathsf{pValues} = \mathsf{pValues} + [2*\min(\mathsf{pValue1}, \mathsf{pValue2})]$
    }
    \Return~$\min(\mathsf{pValues}) < \mathsf{threshold}$
\caption{$is\_invariant$: invariance testing based on invariant residuals}\label{alg:invariance_test}
\end{algorithm}

\section{Simulation details}\label{app:sim_details}
Different synthetic datasets are generated by varying the following parameters: (1) number of predictors $d$, (2) number of interventions $N_{\text{int}}$, and (3) the type of intervention.
For each set of parameters, the following procedure is repeated 100 times to generate 100 different random graphs.
\begin{enumerate}
    \item Sample a random acyclic graph $\mathcal G$ with $d + 1$ nodes and a pair of nodes in $\mathcal G$ is connected with probability $p_{\text{edge}}$ (which is 0.1 for the large dense graph and is $2/N_{\text{int}}$ otherwise).
    \item Choose a random node with at least 1 parent to be $Y$.
    \item Add a node $E$ with no incoming edges. From of the set $X_1,\dots,X_d$, pick $N_{\text{int}}$ nodes.
    \item If $Y$ is not a descendant of $E$, repeat steps 1--3 until a graph where $Y\in\DE(E)$ is obtained.
\end{enumerate}
$E$ is an environment indicator.
A data sample is observational when $E=0$ and is interventional when $E=1$ (the children of $E$ may be intervened on).

For each graph, 50 sets of edge coefficients ($\beta_{i{\rightarrow}j}$) are drawn randomly.
The coefficients are sampled independently and uniformly from the interval $U((-2, 0.5) \cup (0.5, 2))$.
For each set of coefficients, we sample 4 datasets with different sample sizes $n \in \{10^2, 10^3, 10^4, 10^5\}$.
Each data sample is generated as follow.
\begin{enumerate}
\item Sample $E$ from a Bernoulli distribution with probability $p = 0.5$.
\item Iterate through the nodes in graph in topological order and generate its value: \begin{itemize}
        \item If $E=1$ and the node is a child of $E$, the value depends on the type of intervention.
        \item Else, the value is generated based on its parents' values using functions $f_j$. Gaussian noise is added afterward.
            \[ X_j = f_j(\PA(X_j)) + N(0, 1) \]
        \item There are 4 different types of functions $f$ that were used, namely
                \begin{enumerate}
                    \item[a] Linear~\citep{peters2016causal}: $ f_j(\PA(X_j)) = \sum_{i\in\PA(X_j)} \beta_{i{\rightarrow}j} X_i  $
                    \item[b] Nonlinear type 1~\citep{heinze2018invariant}: $ f_j(\PA(X_j)) = \prod_{i\in\PA(X_j)} \mathsf{sign}(\beta_{i{\rightarrow}j}) g_{ij}(X_i) $
                        where $g$ is one of these 4 functions:
                        \begin{enumerate}
                            \item[1.] $g(x) = x$
                            \item[2.] $g(x) = \max\{0, x\}$
                            \item[3.] $g(x) = \mathsf{sign}(x) \sqrt{|x|}$
                            \item[4.] $g(x) = \sin(2\pi x)$
                        \end{enumerate}
                    \item[b] Nonlinear type 2~\citep{heinze2018invariant}: $ f_j(\PA(X_j)) = \sum_{i\in\PA(X_j)} \beta_{i{\rightarrow}j} g_{ij}(X_i)$ with the same $g$ functions as nonlinear type 1
                    \item[b] Nonlinear type 3: $ f_j(\PA(X_j)) = \sum_{i\in\PA(X_j)} \beta_{i{\rightarrow}j} X_i^2  $
                \end{enumerate}
    \end{itemize}
\end{enumerate}
We consider 3 types of interventions: (1) perfect interventions, (2) imperfect interventions, and (3) noise interventions~\citep{cooper1999causal,tian2001causal,eberhardt2007interventions,peters2016causal}.
\begin{itemize}
    \item Perfect interventions: the values of the intervened nodes are set to 1, regardless of the values of their parents.
    \item Imperfect interventions: the values of the intervened nodes are still the weighted sum of their parents. However, the edge coefficients are modulated by coefficients $\gamma_{i{\rightarrow}j}\sim U(0.0, 0.2)$.
    \item Noise interventions: the values of the intervened nodes are still the weighted sum of their parents. However, the additive noise is $N(0, 4)$ instead of $N(0, 1)$.
\end{itemize}
After all data samples are generated, the data are standardized along the causal order to prevent shortcut learning~\citep{reisach2021beware}.

\section{Additional results}
\subsection{Linear Simulations --- Perfect Interventions}\label{app:linear}
\begin{figure}[h]
    \centering
    \includegraphics[width=.75\linewidth]{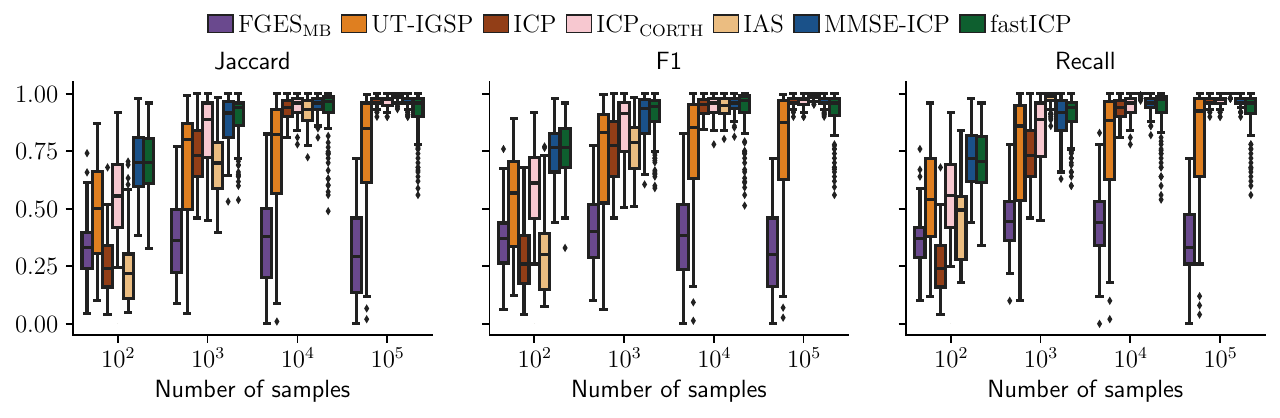}
    \caption{Performance when $N_{\text{int}}=d=6$ (Table~\ref{tbl:sim_properties}, No. 1). Reference set: $\PA(Y)$.}
\end{figure}

\begin{figure}[h]
    \begin{subfigure}{.48\linewidth}
        \centering
        \includegraphics[width=\linewidth]{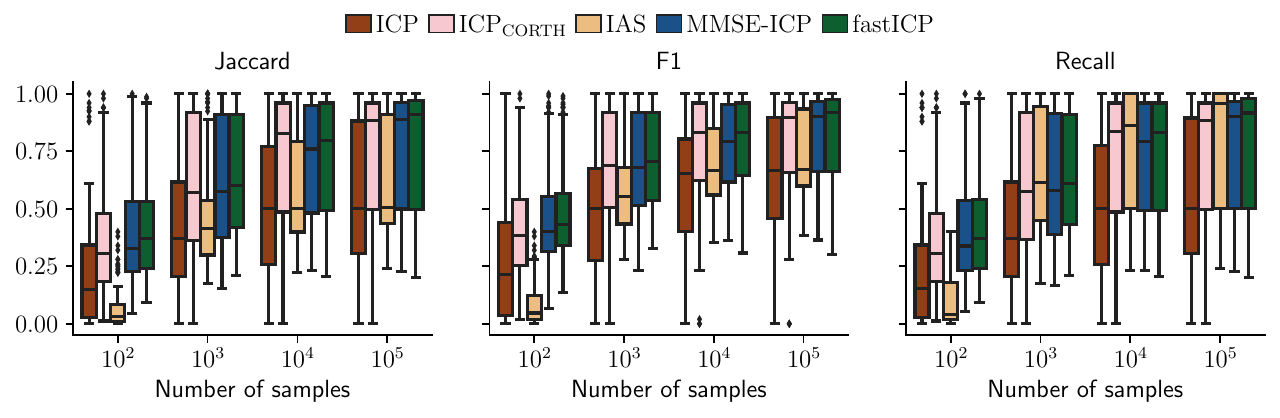}
        \caption{Reference set: $\PA(Y)$}
    \end{subfigure}%
    \begin{subfigure}{.48\linewidth}
        \centering
        \includegraphics[width=\linewidth]{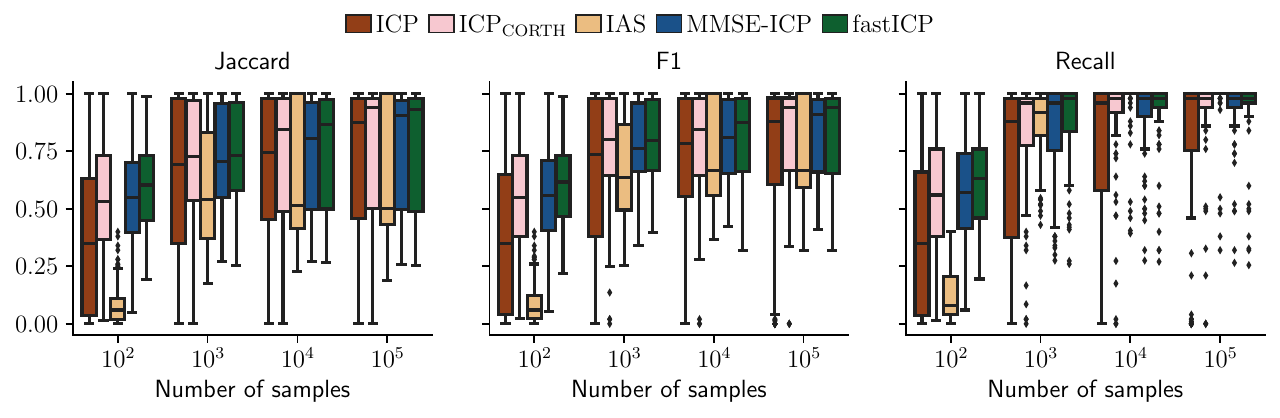}
        \caption{Reference set: \SStarFull}
    \end{subfigure}
    \caption{Performance when $N_{\text{int}}=1; d=6$ (Table~\ref{tbl:sim_properties}, No. 2)}
\end{figure}
\begin{figure}[h]
    \begin{subfigure}{.48\linewidth}
        \centering
        \includegraphics[width=\linewidth]{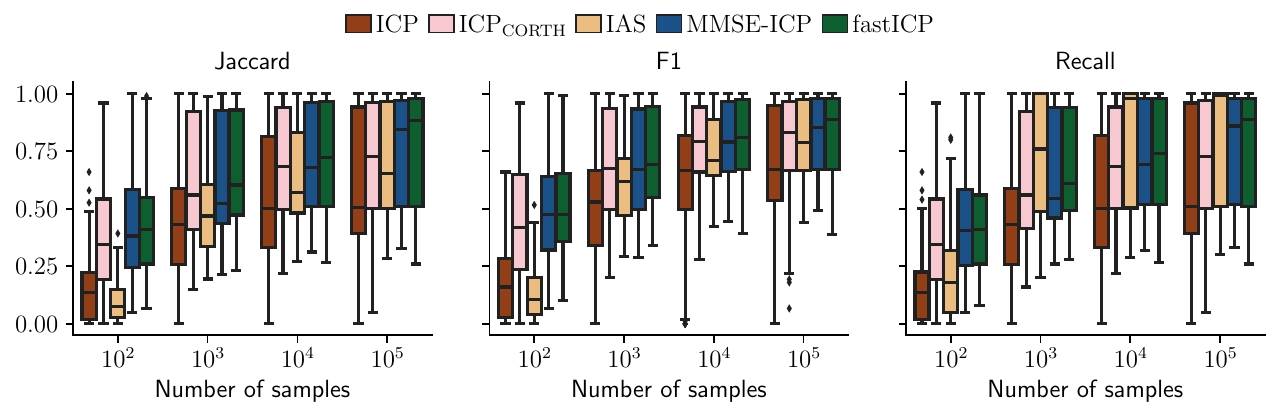}
        \caption{Reference set: $\PA(Y)$}
    \end{subfigure}
    \begin{subfigure}{.48\linewidth}
        \centering
        \includegraphics[width=\linewidth]{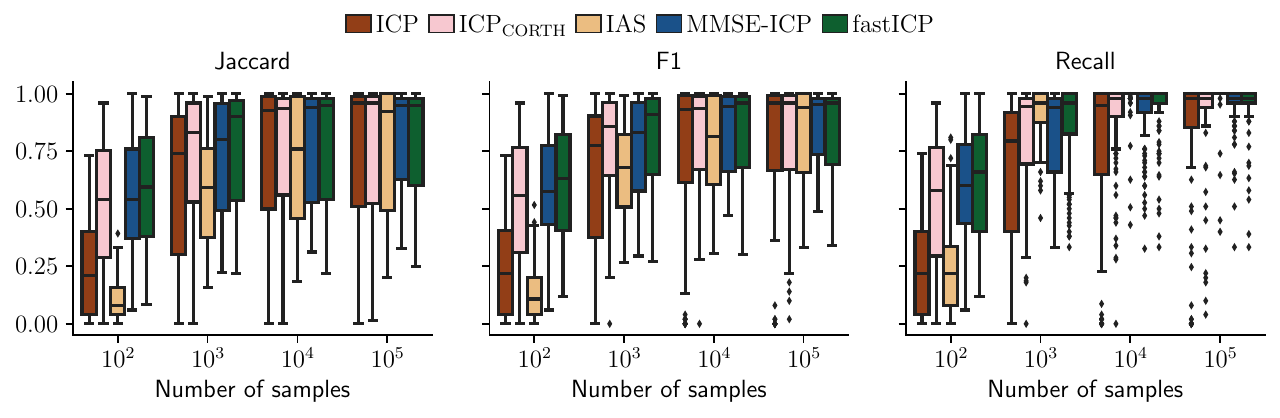}
        \caption{Reference set: \SStarFull}
    \end{subfigure}
    \caption{Performance when $N_{\text{int}}=2; d=6$ (Table~\ref{tbl:sim_properties}, No. 2)}
\end{figure}
\begin{figure}[h]
    \begin{subfigure}{.48\linewidth}
        \centering
        \includegraphics[width=\linewidth]{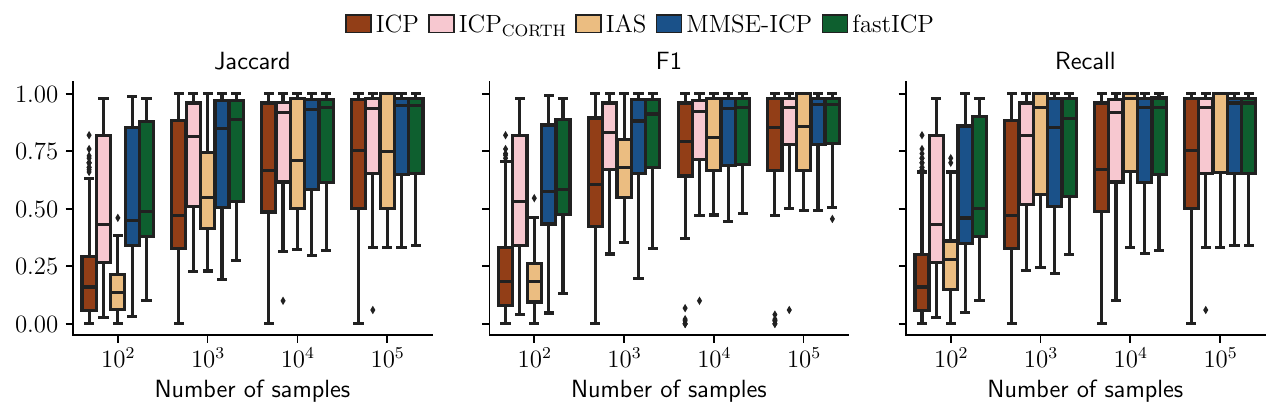}
        \caption{Reference set: $\PA(Y)$}
    \end{subfigure}%
    \begin{subfigure}{.48\linewidth}
        \centering
        \includegraphics[width=\linewidth]{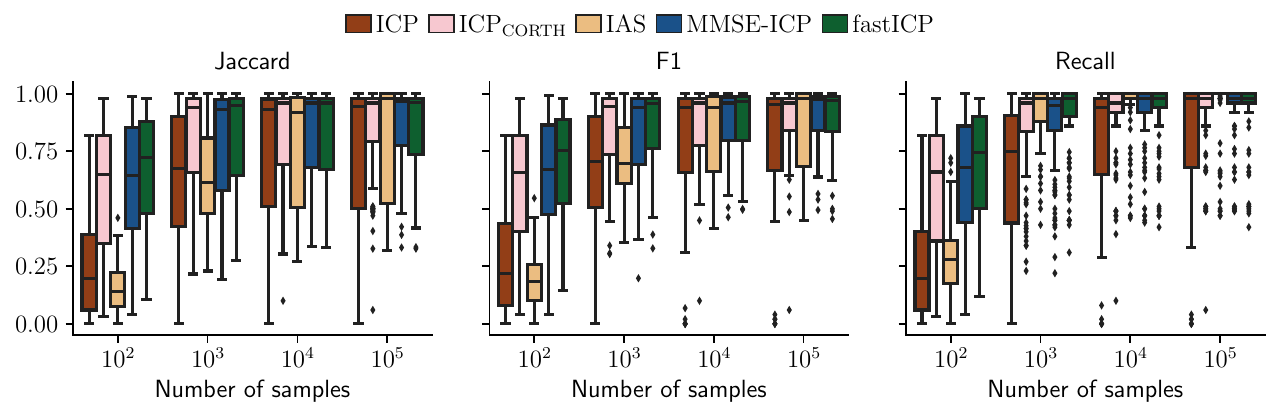}
        \caption{Reference set: \SStarFull}
    \end{subfigure}
    \caption{Performance when $N_{\text{int}}=3; d=6$ (Table~\ref{tbl:sim_properties}, No. 2)}
\end{figure}

ICP, \mICP, and \MNAME~obtain very similar results for large sparse graphs (Figure~\ref{fig:app_d100}).
However, for large dense graph (Figure~\ref{fig:app_d100d}), both \mICP~and \MNAME~outperform the baselines.
Although \mICP~and \MNAME~achieve similar performance for small graph or large sparse graph in which \mICP~can search exhaustively, when the graph is large and dense, the ability to search through all nodes gives \MNAME~an edge over \mICP. 

\begin{figure}[h]
    \begin{subfigure}{.48\linewidth}
        \centering
        \includegraphics[width=\linewidth]{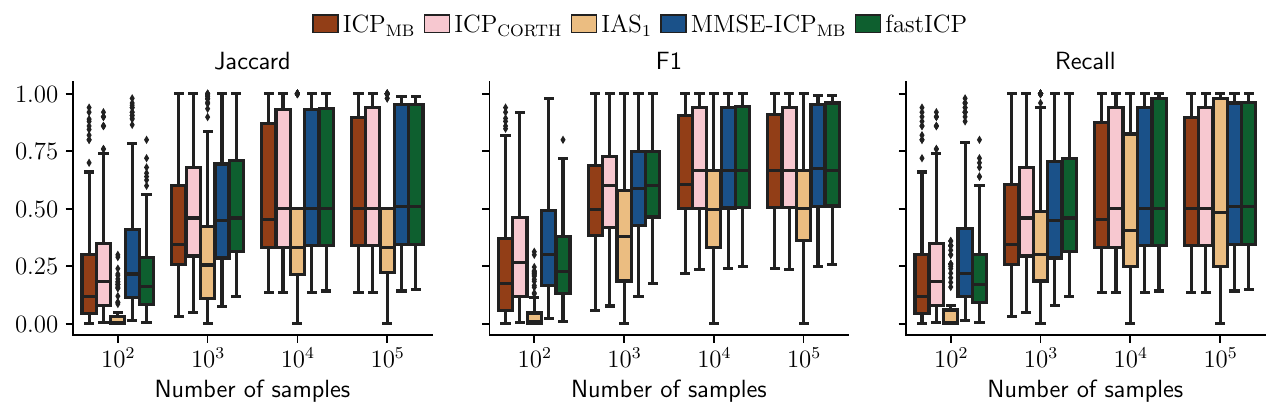}
        \caption{Reference set: $\PA(Y)$}
    \end{subfigure}%
    \begin{subfigure}{.48\linewidth}
        \centering
        \includegraphics[width=\linewidth]{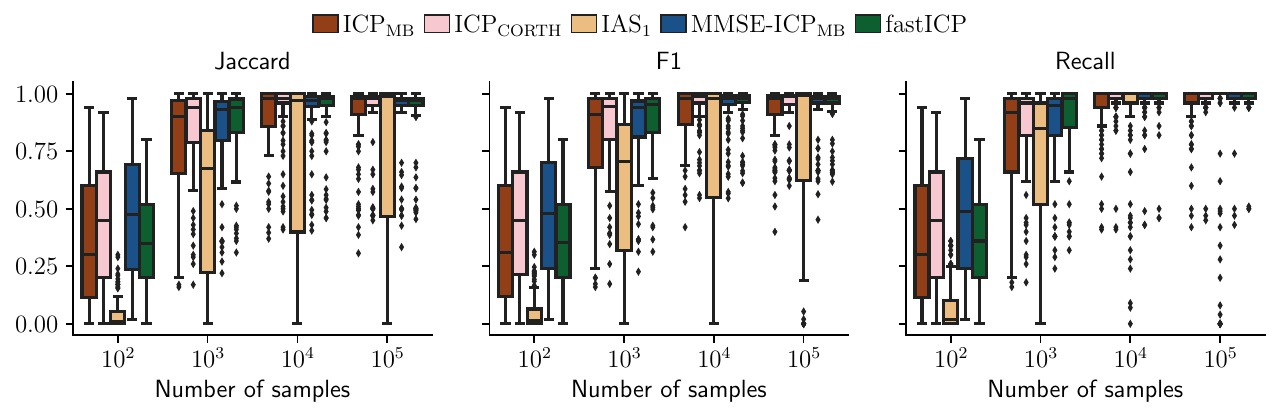}
        \caption{Reference set: \SStarFull}
    \end{subfigure}
    \caption{Performance for large sparse graphs (Table~\ref{tbl:sim_properties}, No. 5)}\label{fig:app_d100}
\end{figure}
\begin{figure}[h]
    \begin{subfigure}{.48\linewidth}
        \centering
        \includegraphics[width=\linewidth]{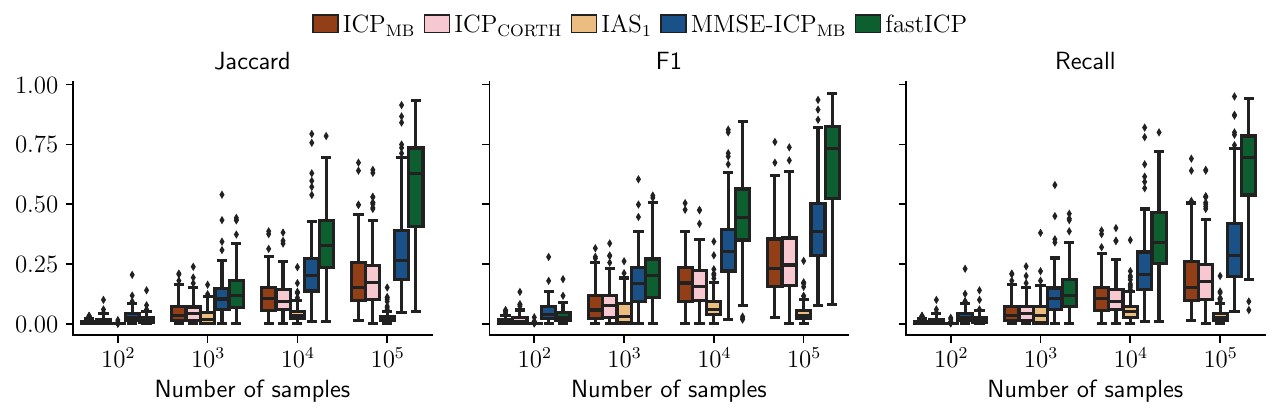}
        \caption{Reference set: $\PA(Y)$}
    \end{subfigure}%
    \begin{subfigure}{.48\linewidth}
        \centering
        \includegraphics[width=\linewidth]{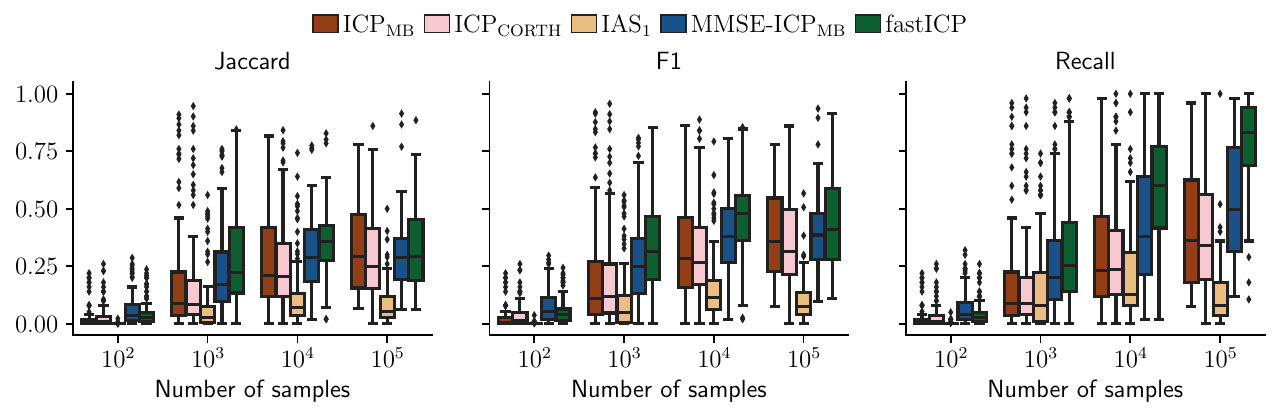}
        \caption{Reference set: \SStarFull}
    \end{subfigure}
    \caption{Performance for large dense graphs (Table~\ref{tbl:sim_properties}, No. 6)}\label{fig:app_d100d}
\end{figure}

\clearpage
\subsection{Linear Simulations --- Strong Dependency}\label{app:mirror}
Since $\MSE$ used in the proposed algorithms is estimated empirically from data, there may be potential robustness issues with minimizing empirical $\MSE$.
For instance, the proposed algorithms can be sensitive to strong dependency between causal parents and other predictors.
To verify the robustness of the proposed algorithms in general and the $\MSE$ estimation in particular, we conduct experiments whereby noisy copies of $X_i$ are created.
In particular, after the $X_i$ variables are generated following a noisy linear system with perfect intervention, the copies $X'_i$ of $X_i$ are created by adding Gaussian noise to $X_i$, i.e.~$X'_i := X_i + N(0, \epsilon^2)$.
We experimented with 2 non-zero values for $\epsilon$: 0.1 and 0.01.
The smaller $\epsilon$ is, the stronger the dependency.
$\epsilon$ must be non-zero because it is assumed that there is no redundant variables in the system.
The algorithms are robust when they do not mistake the noisy copies for the true parents.
\begin{figure}[h]
    \centering
    \begin{tikzpicture}[> = stealth, shorten > = 1pt, auto, node distance = 2.0cm, thick]
        \tikzstyle{every state}=[draw=black, thick, fill=white, minimum size=0.8cm]
        \node[state] (E) {$E$};
        \node[state] (X2) [right of=E]{$X_2$};
        \node[state] (X1) [right of=X2]{$X_1$};
        \node[state] (Y) [right of=X1] {$Y$};
        \node[state] (XX2) [below right of=X2]{$X'_2$};
        \node[state] (XX1) [below right of=X1]{$X'_1$};
        \path[->, line width=1, color=black] (E) edge node {} (X2);
        \path[->, line width=1] (X2) edge node {} (X1);
        \path[->, line width=1] (X1) edge node {} (Y);
        \path[->, line width=1] (X2) edge node {} (XX2);
        \path[->, line width=1] (X1) edge node {} (XX1);
    \end{tikzpicture}
    \caption{Example of a simulation with strong dependency. $X'_i$ are noisy copies of $X_i$.}\label{fig:mirror}
\end{figure}
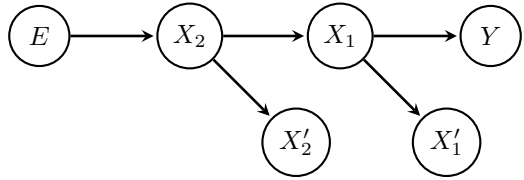

Figure~\ref{fig:mirror1} and~\ref{fig:mirror2} show the results for the two different values of $\epsilon$. 
The results indicate that the empirical $\MSE$ estimation is quite robust as the proposed algorithms still do relatively well and still outperform the baseline models.

\begin{figure}[h]
    \begin{subfigure}{.48\linewidth}
        \centering
        \includegraphics[width=\linewidth]{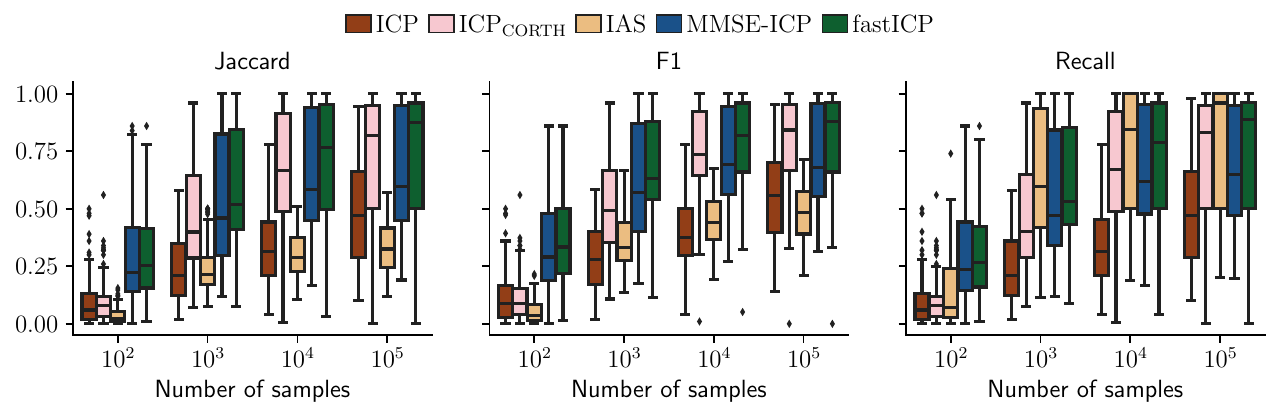}
        \caption{Reference set: $\PA(Y)$}
    \end{subfigure}%
    \begin{subfigure}{.48\linewidth}
        \centering
        \includegraphics[width=\linewidth]{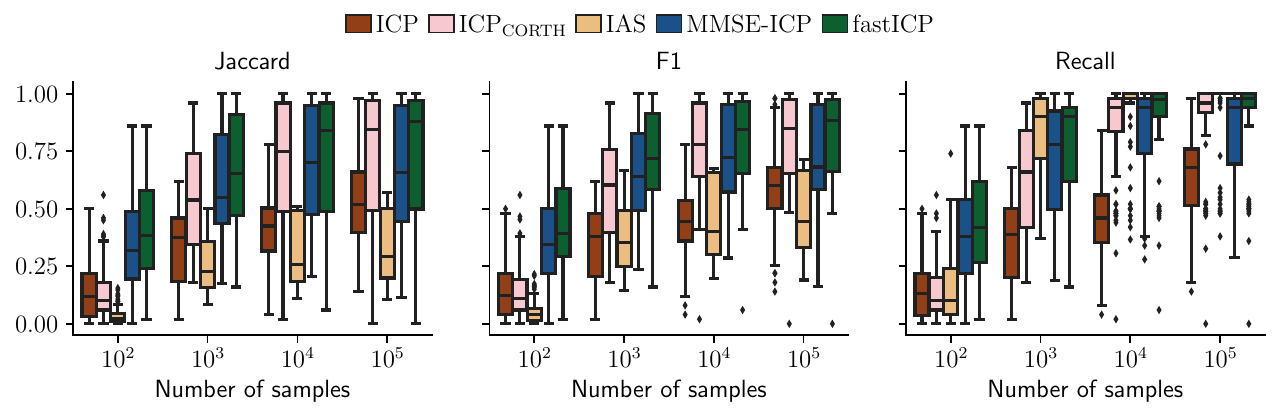}
        \caption{Reference set: \SStarFull}
    \end{subfigure}
    \caption{Performance when $N_{\text{int}}=1; d=6, \epsilon=0.1$}\label{fig:mirror1}
\end{figure}

\begin{figure}[h]
    \begin{subfigure}{.48\linewidth}
        \centering
        \includegraphics[width=\linewidth]{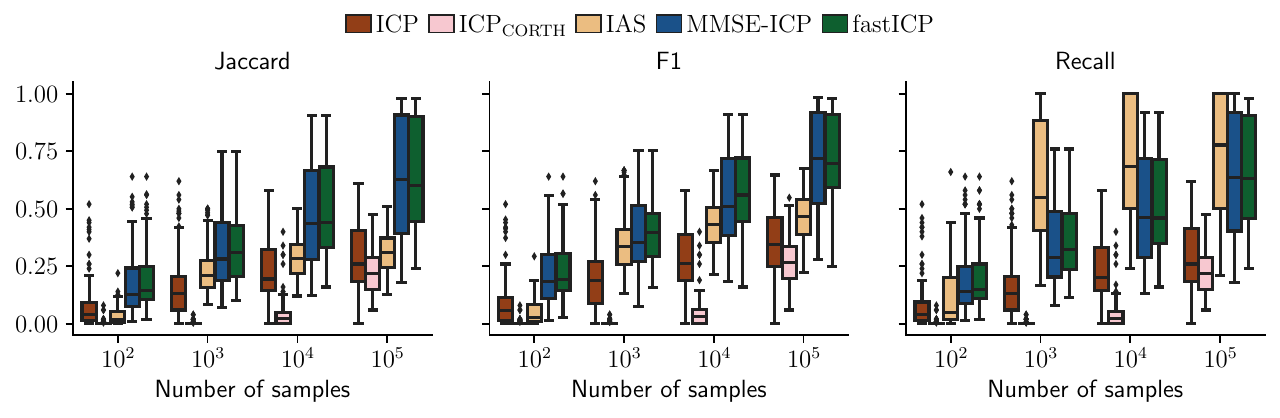}
        \caption{Reference set: $\PA(Y)$}
    \end{subfigure}%
    \begin{subfigure}{.48\linewidth}
        \centering
        \includegraphics[width=\linewidth]{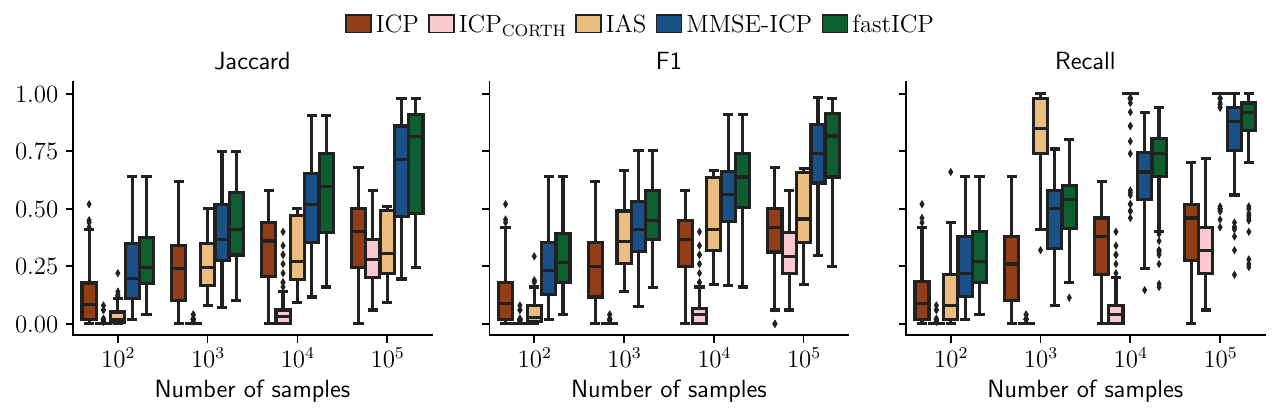}
        \caption{Reference set: \SStarFull}
    \end{subfigure}
    \caption{Performance when $N_{\text{int}}=1; d=6, \epsilon=0.01$}\label{fig:mirror2}
\end{figure}

\clearpage
\subsection{Linear Simulations --- Imperfect Interventions}
For noise interventions, \mICP~and \MNAME~achieve similar performance and outperform the baselines in both Jaccard similarity and F1-score (Figure~\ref{fig:app_d6n}).
The same trends are observed for imperfect interventions (Figure~\ref{fig:app_d6s}).
Although \mICP~and \MNAME~outperform ICP and IAS for imperfect interventions, there is still a large variance in the Jaccard similarity and F1-score of \mICP~and \MNAME.
This could be because the approximate test based on residuals of a linear predictor does not have sufficient power to detect this types of changes in mechanisms.
Switching to an invariance test with higher detection power might result in better results and lower variance.

\begin{figure}[h]
    \begin{subfigure}{.48\linewidth}
        \centering
        \includegraphics[width=\linewidth]{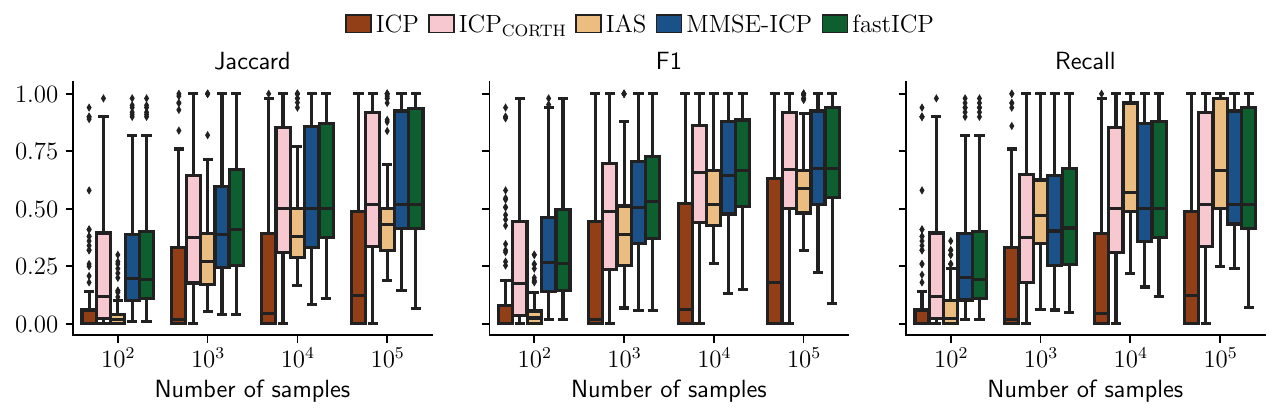}
        \caption{Reference set: $\PA(Y)$}
    \end{subfigure}%
    \begin{subfigure}{.48\linewidth}
        \centering
        \includegraphics[width=\linewidth]{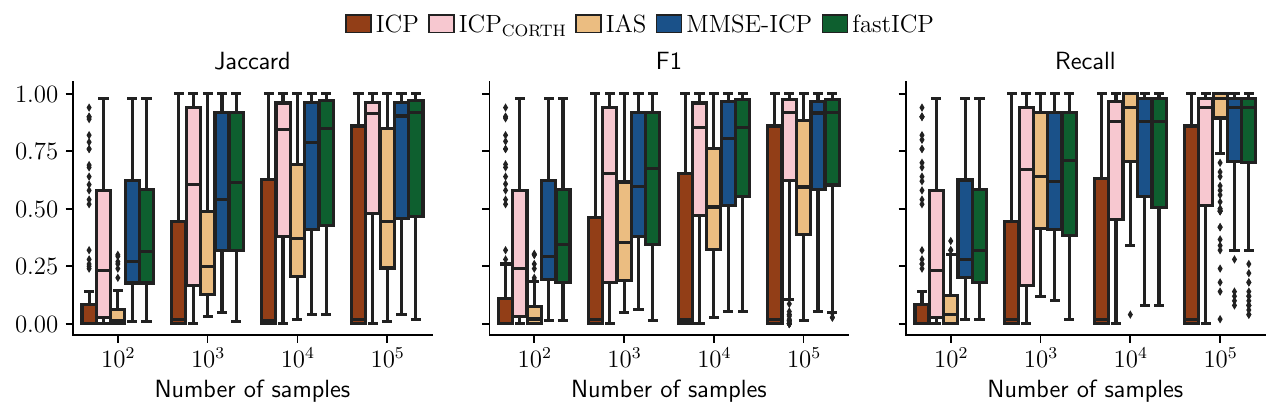}
        \caption{Reference set: \SStarFull}
    \end{subfigure}
    \caption{Performance under ``imperfect'' interventions when $d=6; N_{\text{int}}=1$ (Table~\ref{tbl:sim_properties}, 3rd row)}\label{fig:app_d6s}
\end{figure}
\begin{figure}[h]
    \begin{subfigure}{.48\linewidth}
        \centering
        \includegraphics[width=\linewidth]{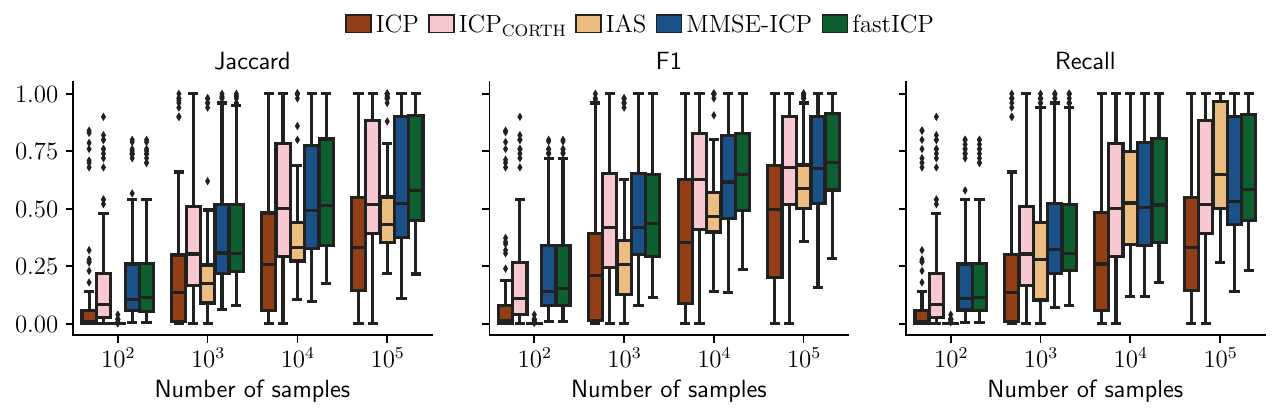}
        \caption{Reference set: $\PA(Y)$}
    \end{subfigure}%
    \begin{subfigure}{.48\linewidth}
        \centering
        \includegraphics[width=\linewidth]{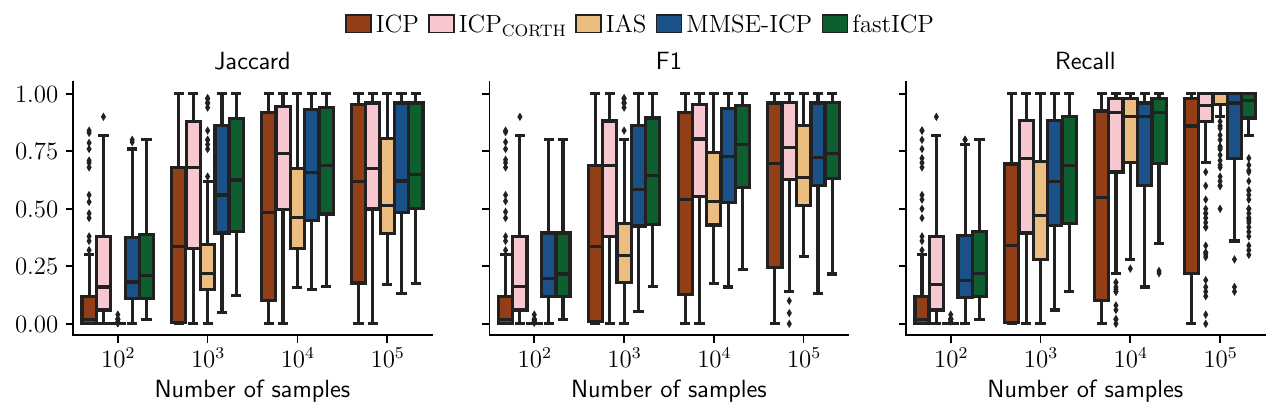}
        \caption{Reference set: \SStarFull}
    \end{subfigure}
    \caption{Performance under ``noise'' interventions when $d=6; N_{\text{int}}=1$ (Table~\ref{tbl:sim_properties}, 4th row)}\label{fig:app_d6n}
\end{figure}

\clearpage
\subsection{Nonlinear Simulations}\label{app:nonlinear}
\begin{figure}[h]
    \begin{subfigure}{.48\linewidth}
        \centering
        \includegraphics[width=\linewidth]{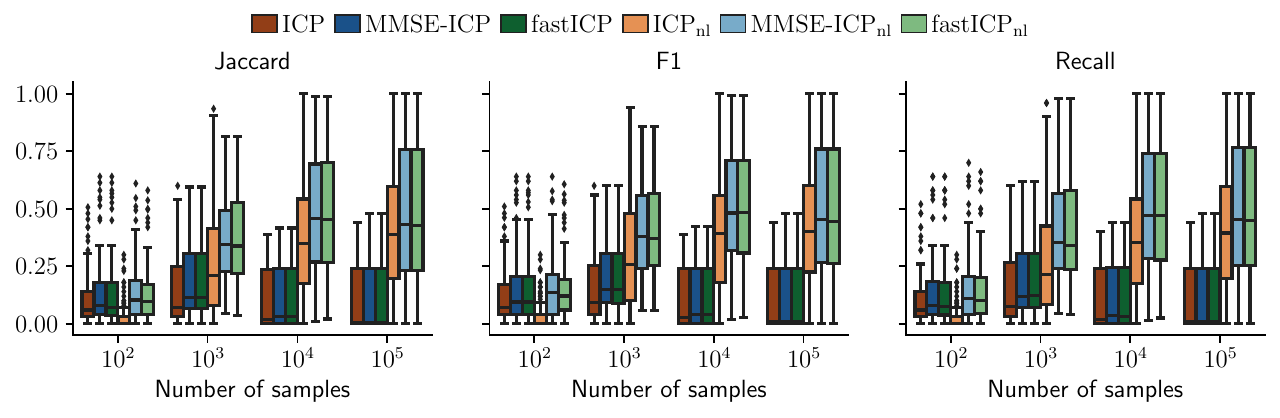}
        \caption{Reference set: $\PA(Y)$}
    \end{subfigure}%
    \begin{subfigure}{.48\linewidth}
        \centering
        \includegraphics[width=\linewidth]{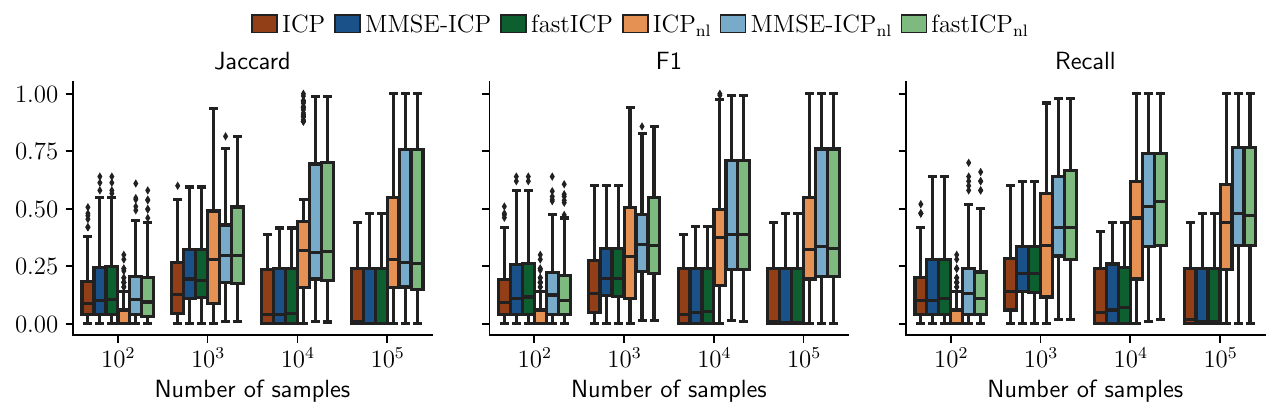}
        \caption{Reference set: \SStarFull}
    \end{subfigure}
    \caption{Nonlinear type 1. $d=6; N_{\text{int}}=1$. Same as Figure~\ref{fig:main_nl}.}\label{fig:app_nl}
\end{figure}

\begin{figure}[h]
    \begin{subfigure}{.48\linewidth}
        \centering
        \includegraphics[width=\linewidth]{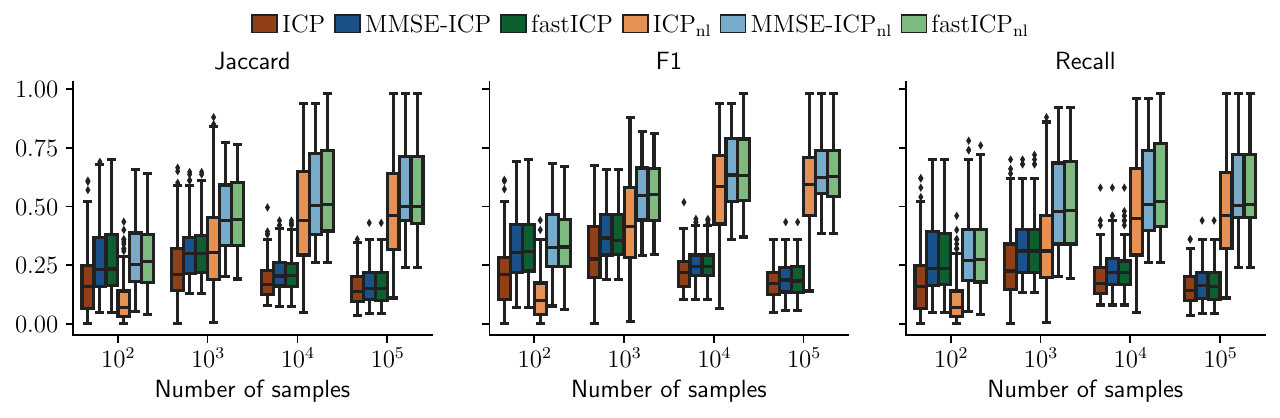}
        \caption{Reference set: $\PA(Y)$}
    \end{subfigure}%
    \begin{subfigure}{.48\linewidth}
        \centering
        \includegraphics[width=\linewidth]{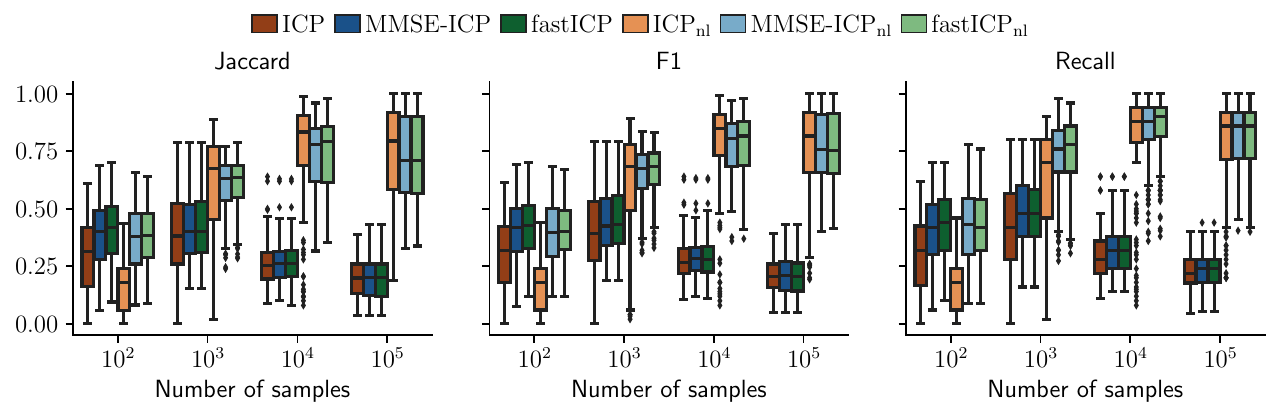}
        \caption{Reference set: \SStarFull}
    \end{subfigure}
    \caption{Nonlinear type 2. $d=6; N_{\text{int}}=1$.}
\end{figure}

\begin{figure}[h]
    \begin{subfigure}{.48\linewidth}
        \centering
        \includegraphics[width=\linewidth]{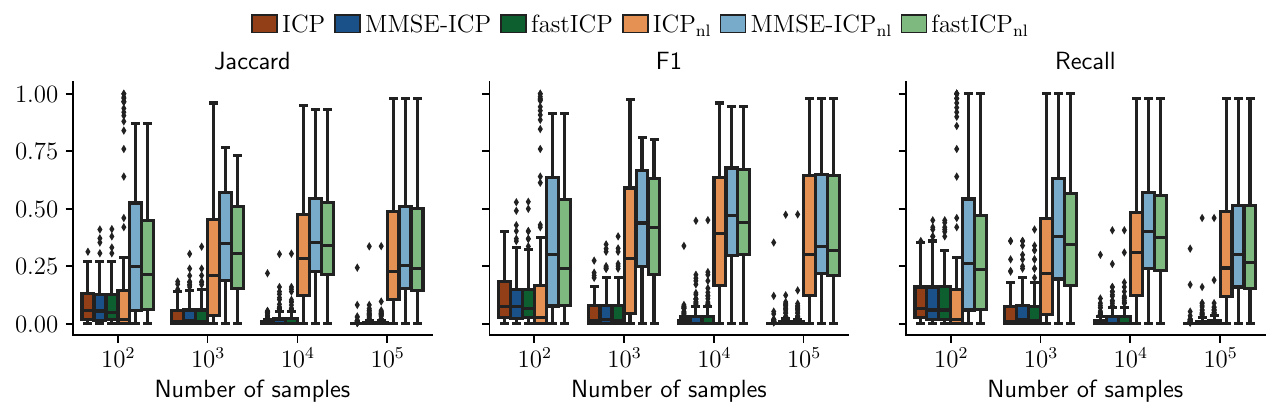}
        \caption{Reference set: $\PA(Y)$}
    \end{subfigure}%
    \begin{subfigure}{.48\linewidth}
        \centering
        \includegraphics[width=\linewidth]{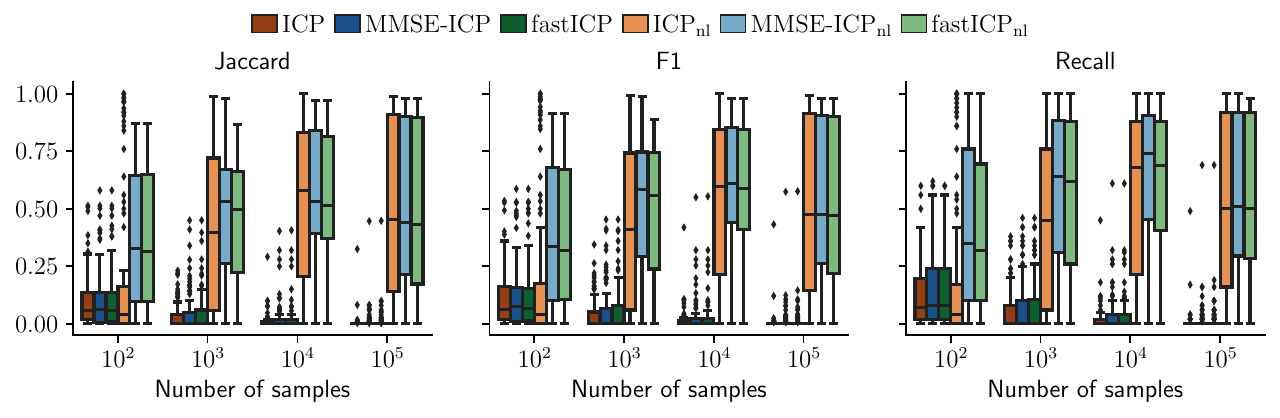}
        \caption{Reference set: \SStarFull}
    \end{subfigure}
    \caption{Nonlinear type 3. $d=6; N_{\text{int}}=1$.}
\end{figure}

\subsection{Real data}
We validated the proposed algorithms using the~\cite{sachs2005causal}'s benchmark which has a well-established causal graph.
Since invariance-based algorithms are designed for local causal discovery problem, but the~\cite{sachs2005causal} benchmark is a global causal discovery, we need to consider each node as a target, one at a time.
We adopt the setup proposed by~\cite{meinshausen2016methods} for this problem.
The results on the~\cite{sachs2005causal} benchmark is shown in Figure~\ref{fig:sachs}.
For each method, we ran multiple times with different cut-off value $\alpha$.
We can see that \mICP~and \MNAME~are usually better than ICP, having higher true positives at the same level of false negatives.

\begin{figure}[t]
    \centering
    \includegraphics[width=.5\linewidth]{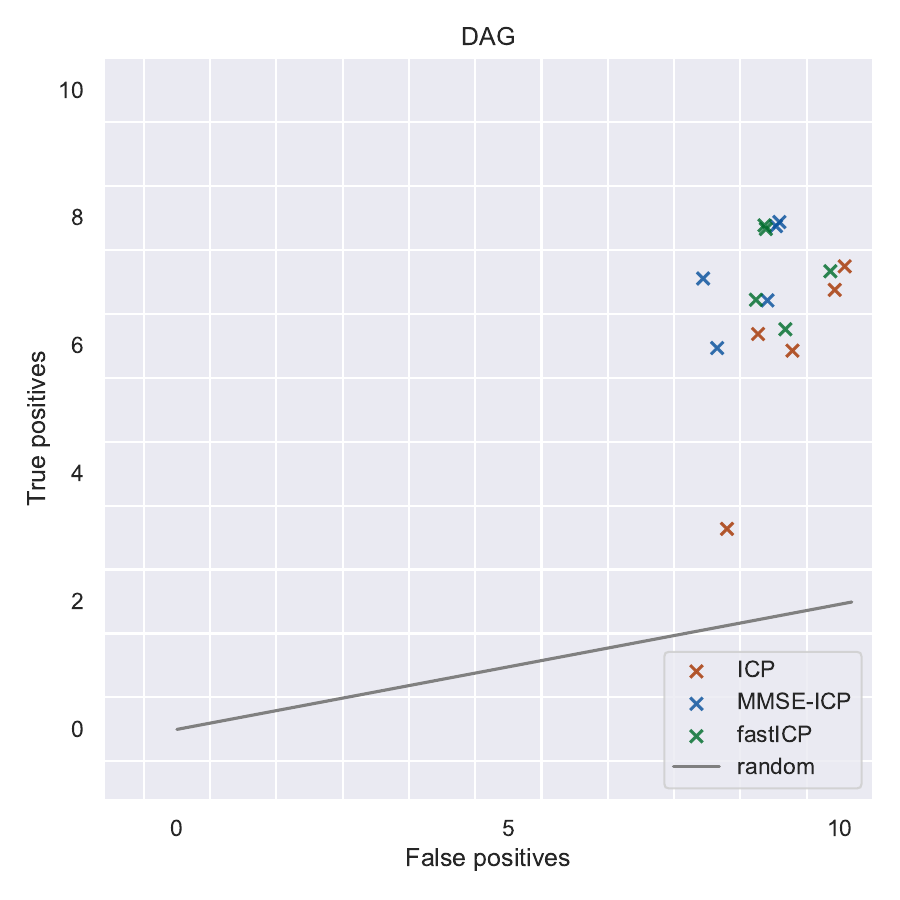}
    \caption{Results of different algorithms on~\cite{sachs2005causal} benchmark.
    Each method is ran multiple times with different cut-off value $\alpha$.
    Solid line is the performance from random guessing.}\label{fig:sachs}
\end{figure}

\end{document}